\newcommand{\bA}{ \mathbf{A }}
\newcommand{\ba}{ \mathbf{a }}
\newcommand{\bB}{ \mathbf{B }}
\newcommand{\bb}{ \mathbf{b }}
\newcommand{\bc}{ \mathbf{c }}
\newcommand{\bD}{ \mathbf{D }}
\newcommand{\bg}{ \mathbf{g }}
\newcommand{\bG}{ \mathbf{G }}
\newcommand{\bh}{ \mathbf{h }}
\newcommand{\bI}{ \mathbf{I }}
\newcommand{\bP}{ \mathbf{P }}
\newcommand{\bu}{ \mathbf{u }}
\newcommand{\bw}{ \mathbf{w }}
\newcommand{\bx}{ \mathbf{x }}
\newcommand{\bdelta}{\boldsymbol{\delta}}
\newcommand{\blambda}{\boldsymbol{\lambda}}
\newcommand{\bnu}{ \boldsymbol{\nu }}
\newcommand{\xbar}{ \mathbf{\bar{x}}}
\newcommand{\btA}{ \mathbf{\tilde{A}} }
\newcommand{\btb}{ \mathbf{\tilde{b}} }
\newcommand{\btG}{ \mathbf{\tilde{G}} }
\newcommand{\bth}{ \mathbf{\tilde{h}} }
\newcommand{\bone}{\mathbf{1}}
\newcommand{\bzero}{\mathbf{0}}
\newcommand{\cI}{\mathcal{I}}
\newcommand{\cJ}{\mathcal{J}}
\newcommand{\cK}{\mathcal{K}}
\newcommand{\cX}{\mathcal{X}}
\newcommand{\cW}{\mathcal{W}}
\newcommand{\bbR}{\mathbb{R}}
\newcommand{\xp}{\bx^*}
\newcommand{\xt}{\bx^\mathrm{obs}}
\newcommand{\SQPbprop}{\ensuremath{\text{SQP}_\mathrm{bprop}}\xspace}
\newcommand{\SQPimpl}{\ensuremath{\text{SQP}_\mathrm{impl}}\xspace}
\newcommand{\SQPdir}{\ensuremath{\text{SQP}_\mathrm{dir}}\xspace}
\newcommand{\SQPdiff}{\ensuremath{\text{SQP}_\mathrm{diff}}\xspace}
\renewcommand{\d}{\mathrm{d}}
\newcommand{\pd}[2][]{\frac{\partial#1}{\partial#2}}
\newcommand{\td}[2][]{\frac{\d#1}{\d#2}}
\newcommand{\grad}{\nabla}
\DeclareMathOperator*{\minimize}{\text{minimize}}
\DeclareMathOperator*{\maximize}{\text{maximize}}
\DeclareMathOperator*{\subjto}{\text{subject to}}
\DeclareMathOperator*{\argmin}{arg\,min}
\newtheorem{theorem}{Theorem}
\newtheorem{corollary}{Corollary}
\title{Learning Linear Programs from Optimal Decisions}
\author{%
  Yingcong Tan\\
  Concordia University\\
  Montreal, Canada\\
  \And
  Daria Terekhov \\
  Concordia University\\
  Montreal, Canada\\
  \And
  Andrew Delong\\
  Concordia University\\
  Montreal, Canada\\
}
\begin{document}

\maketitle

\begin{abstract}
  We propose a flexible gradient-based framework for learning linear programs from optimal decisions. 
  Linear programs are often specified by hand, using prior knowledge of relevant costs and constraints.
  In some applications, linear programs must instead be learned from observations of optimal decisions.
  Learning from optimal decisions is a particularly challenging bi-level problem, and much of the related {\em inverse optimization} literature is dedicated to special cases.
  We tackle the general problem, learning all parameters jointly while allowing flexible parametrizations of costs, constraints, and loss functions.
  We also address challenges specific to learning linear programs, such as empty feasible regions and non-unique optimal decisions.
  Experiments show that our method successfully learns synthetic linear programs and minimum-cost multi-commodity flow instances for which previous methods are not directly applicable.
  We also provide a fast batch-mode PyTorch implementation of the homogeneous interior point algorithm, which supports gradients by implicit differentiation or backpropagation.\vspace{-.3em}
\end{abstract}

\section{Introduction}\vspace{-.4em}
In linear programming, the goal is to make a optimal decision under a linear objective and subject to linear constraints. Traditionally, a linear program is designed using knowledge of relevant costs and constraints. More recently, methodologies that are data-driven have emerged. For example, in the ``predict-then-optimize'' paradigm \citep{Elmachtoub19}, linear programs are learned from direct observations of previous costs or constraints.

Inverse optimization~(IO) \citep{Burton92,Troutt95,Ahuja01}, in contrast, learns linear programs from observations of optimal decisions rather than of the costs or constraints themselves. The IO approach is particularly important when observations come from optimizing agents (e.g., experts \citep{Chan14,Barmann17} or customers \citep{Dong18}) who make near-optimal decisions with respect to their internal (unobserved) optimization models. 

From a machine learning perspective, the IO setup is as follows: we are given feature vectors $\{\bu_{1}, \bu_{2}, \dots, \bu_{N}\}$ representing conditions (e.g., time, prices, weather) and we observe the corresponding decision targets $\{\xt_{1}, \xt_{2}, \dots, \xt_{N}\}$ (e.g., quantities, actions) determined by an unknown optimization process, which in our case is assumed linear. We view IO as the problem of inferring a constrained optimization model that gives identical (or equivalent) decisions, and which generalizes to novel conditions $\bu$. The family of candidate models is assumed parametrized by some vector~$\bw$.

Learning a constrained optimizer that makes the observations both feasible \emph{and} optimal poses multiple challenges that have not been explicitly addressed. For instance, parameter setting $\bw_1$ in Figure~\ref{fig:example1} makes the observed decision $\xt_1$ optimal but not feasible, $\bw_2$ produces exactly the opposite result, and some $\bw$ values (black-hatched region in Figure~\ref{fig:example1}) are not even admissible because they will result in empty feasible regions. Finding a parameter such as $\bw_3$ that is consistent with the observations can be difficult. We formulate the learning problem in a novel way, and tackle it with gradient-based methods despite the inherent bi-level nature of learning. Using gradients from backpropagation or implicit differentiation, we successfully learn linear program instances of various sizes as well as learning the costs and right-hand coefficients of a minimum-cost multi-commodity flow problem.

\begin{figure}[t]
\hspace{-3.95em}\includegraphics[width=1.104\textwidth]{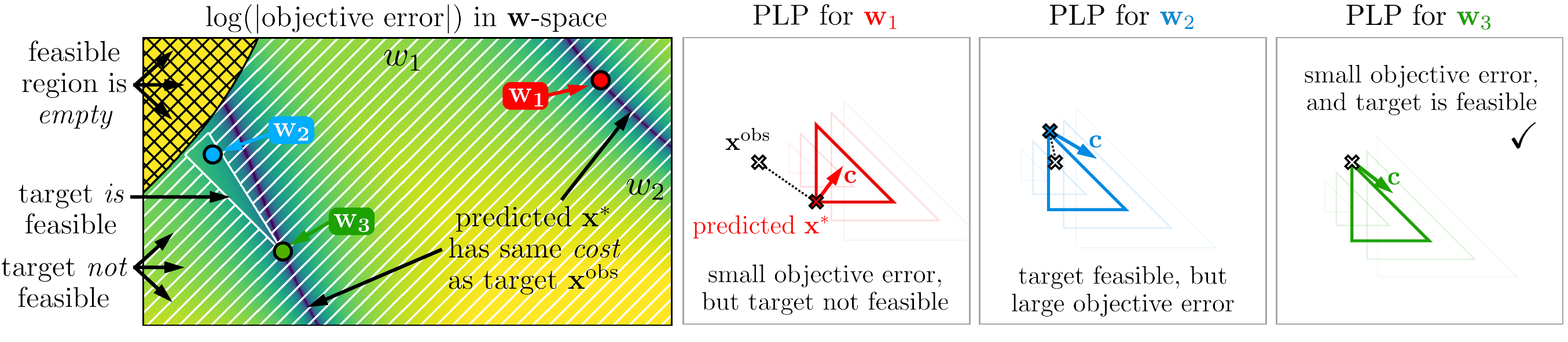}\vspace{-.35em}
\caption{A depiction of our constrained learning formulation. We learn a parametric linear program (PLP), here parametrized by a feature $u$ and weights $\bw \!=\! (w_1, w_2)$ and using a single training observation $(u_1, \xt_1)$. The PLP corresponding to three parameter settings $\bw_1, \bw_2, \bw_3$ are shown, with the cost vector and feasible region corresponding to $u_1$ emphasized. The goal of learning is to find solutions such as $\bw^* = \bw_3$. (See Appendix for the specific PLP used in this example.) \label{fig:example1} \vspace{-1em}}
\end{figure}

{\bf Parametric Linear Programs \;} In a linear program (LP), decision variables $\bx \in \bbR^D$ may vary, and the cost coefficients ${\bc \in \bbR^D}$, inequality constraint coefficients ${\bA \in \bbR^{M_1 \times D}}$, ${\bb \in \bbR^{M_1}}$, and equality constraint coefficients ${\bG \in \bbR^{M_2 \times D}}$, ${\bh \in \bbR^{M_2}}$ are all constant. In a {\em parametric linear program} (PLP), the coefficients (and therefore the optimal decisions) may depend on features~$\bu$. In order to infer a PLP from data, one may define a suitable hypothesis space parametrized by $\bw$. We refer to this hypothesis space as the form of our {\em forward optimization problem} (FOP).\\[-0.4em] 
\noindent\begin{minipage}{.25\linewidth}
\begin{equation} \label{eq:LP} \tag{LP}
\begin{aligned}
\textstyle \min_{\bx} \;\; & \bc^T \bx \\
\text{s.t.}           \;\; & \bA \bx \leq \bb \\
                           & \bG \bx =    \bh
\end{aligned}
\end{equation}
\end{minipage}
\begin{minipage}{.34\linewidth}
\begin{equation} \label{eq:PLP} \tag{PLP}
\begin{aligned}
\textstyle \min_{\bx} \;\; & \bc(\bu)^T \bx \\
\text{s.t.}           \;\; & \bA(\bu)\bx \leq \bb(\bu)\\
                           & \bG(\bu)\bx =    \bh(\bu)
\end{aligned}
\end{equation}
\end{minipage}
\begin{minipage}{.41\linewidth}
\begin{equation} \label{eq:FOP} \tag{FOP}
\begin{aligned}
\textstyle \min_{\bx} \;\; & \bc(\bu, \bw)^T \bx \\
\text{s.t.}           \;\; & \bA(\bu, \bw)\bx \leq \bb(\bu, \bw)\\
                           & \bG(\bu, \bw)\bx =    \bh(\bu, \bw)
\end{aligned}
\end{equation}
\end{minipage}\\[.4em]
A choice of hypothesis $\bw$ in \eqref{eq:FOP} identifies a PLP, and a subsequent choice of conditions $\bu$ identifies an LP. The LP can then be solved to yield an optimal decision $\bx^*$ under the model. These predictions of optimal decisions can be compared to observations at training time, or can be used to anticipate optimal decisions under novel conditions $\bu$ at test time. 
\vspace{-.5em}
\section{Related Work}\label{sec:related_work} \vspace{-.3em}

{\bf Inverse optimization \;}
IO has focused on developing optimization models for minimally adjusting a prior estimate of $\bc$ to make a single feasible observation $\xt$ optimal \citep{Ahuja01,Heuberger04} or for making $\xt$ minimally sub-optimal to~\eqref{eq:LP} without a prior $\bc$ \citep{Chan14,Chan19}. Recent work \citep{Babier19} develops exact approaches for imputing non-parametric $\bc$ given multiple potentially infeasible solutions to \eqref{eq:LP}, and to finding non-parametric $\bA$ and/or $\bb$ \citep{Chan18c,Ghobadi20}. In the parametric setting, joint estimation of $\bA$ and $\bc$ via a maximum likelihood approach was developed by \citet{Troutt05,Troutt08} when only $\bh$ is a function of $\bu$.  \citet{Gallego17} jointly learn $\bc$ and $\bb$ which are affine functions of $\bu$. \citet{Barmann17,Barmann20} and \cite{Dong18} study online versions of inverse linear and convex optimization, respectively, learning a sequence of cost functions where the feasible set for each observation are assumed to be fully-specified. \citet{tan2019dio} proposed a gradient-based approach for learning cost and constraints of a PLP, by `unrolling' a barrier interior point solver and backpropagating through it. Their formulation does not aim to avoid situations where a training target is infeasible, like the one shown in Figure~\ref{fig:example1} for $\bw_1$.  

In inverse convex optimization, the focus has been in imputing parametric cost functions while assuming that the feasible region is known for each~$\bu_i$ \citep{Keshavarz11,Bertsimas15,Aswani18, Esfahani18}, usually under assumptions of a convex set of admissible~$\bu$, the objective and/or constraints being convex in $\bu$, and uniqueness of the optimal solution for every~$\bu$. Furthermore, since the feasible region is fixed for each~$\bu$, it is simply assumed to be non-empty and bounded, unlike for our work. Although our work focuses on linear programming, it is otherwise substantially more general, allowing for learning of all cost and constraint coefficients simultaneously with no convexity assumptions related to~$\bu$, no restrictions on the existence of multiple optima, and explicit handling of empty or unbounded feasible regions. 

{\bf Optimization task-based learning \;}
\citet{Kao09} introduces the concept of directed regression, where the goal is to fit a linear regression model while minimizing the decision loss, calculated with respect to an unconstrained quadratic optimization model. \citet{Donti2017} use a neural network approach to minimize a task loss which is calculated as a function of the optimal decisions in the context of stochastic programming. \citet{Elmachtoub19} propose the ``Smart Predict-then-Optimize'' framework in which the goal is to predict the cost coefficients of a linear program with a fixed feasible region given past observations of features and true costs, i.e., given $(\bu_i, \bc_i)$. Note that knowing $\bc_i$ in this case implies we can solve for~$\bx^*_i$, so our framework can in principle be applied in their setting but not vice versa. Our framework is still amenable to more `direct' data-driven prior knowledge: if in addition to $(\bu_i, \xp_i)$ we have partial or complete observations of $\bc_i$ or of constraint coefficients, regressing to these targets can easily be incorporated into our overall learning objective.

{\bf Structured prediction \;} In structured output prediction \citep{Taskar05,Bakir07,Nowozin14,Daume15}, each prediction is $\bx^* \in \argmin_{\bx \in \cX(\bu)} f(\bx, \bu, \bw)$ for an objective $f$ and known output structure $\cX(\bu)$. In our work the structure is also learned, parametrized as $\mathcal{X}(\bu, \bw) = \left\{\, \bx \mid \bA(\bu, \bw) \bx \leq \bb(\bu, \bw), \, \bG(\bu, \bw) \bx = \bh(\bu, \bw) \,\right\}$, and the objective is linear $f(\bx, \bu, \bw) = \bc(\bu, \bw)^T \bx$. In structured prediction the loss $\ell$ is typically a function of $\xp$ and a target $\xbar$, whereas in our setting it is important to consider a parametric loss $\ell(\bx^*, \xbar, \bu, \bw)$.

{\bf Differentiating through an optimization \;}
Our work involves differentiating through an LP. \citet{bengio2000gradient} proposed gradient-based tuning of neural network hyperparameters and, in a special case, backpropagating through the Cholesky decomposition computed during training (suggested by L\'eo Bottou). \citet{Stoyanov11} proposed backpropagating through a truncated loopy belief propagation procedure. \citet{domke2012generic,Domke13} proposed automatic differentiation through truncated optimization procedures more generally, and \citet{maclaurin2015gradient} proposed a similar approach for hyperparameter search. 
The continuity and differentiability of the optimal solution set of a quadratic program has been extensively studied~\citep{lee2006quadratic}. \citet{Amos2017} recently proposed integrating a quadratic optimization layer in a deep neural network, and used implicit differentiation to derive a procedure for computing parameter gradients. As part of our work we specialize their approach, providing an expression for LPs. Even more general is recent work on differentiating through convex cone programs \citep{agrawal2019differentiating}, submodular optimization \citep{djolonga2017differentiable}, and arbitrary constrained optimization~\citep{gould2019deep}. There are also versatile perturbation-based differentiation techniques \citep{papandreou2011perturb,berthet2020learning}.

\vspace{-.1em}
\section{Methodology}\label{sec:methodology}\vspace{-.1em}
Here we introduce our new bi-level formulation and methodology for learning parametric linear programs. Unlike previous approaches (e.g. 
\citet{Aswani18}), we do not transform the problem to a single-level formulation, and so we do not require simplifying assumptions. We propose a technique for tackling our bi-level formulation with gradient-based non-linear programming methods.

\vspace{-.5em}
\subsection{Inverse Optimization as PLP Model Fitting} \label{sec:io} \vspace{-.25em}

Let $\{(\bu_i, \xt_i)\}_{i=1}^N$ denote the training set. A loss function $\ell(\xp, \xt, \bu, \bw)$ penalizes discrepancy between prediction $\xp$ and target $\xt$ under conditions $\bu$ for the PLP hypothesis identified by~$\bw$. Note that if $\xt_i$ is optimal under conditions $\bu_i$, then $\xt_i$ must also be feasible.
We therefore propose the following bi-level formulation of the {\em inverse linear optimization problem} (ILOP):\\[-.8em]
\begin{subequations}
\newcommand{\subequationsformat}{\theparentequation.\arabic{equation}}
\begin{align}\label{eq:ILOP}\tag{ILOP}
\minimize_{\bw \in \cW} \quad &  \textstyle \frac{1}{N} \sum_{i=1}^{N} \ell(\bx^*_i, \xt_i, \bu_i, \bw) + r(\bw)\\[.0em]
\text{subject to}         \quad & \bA(\bu_i, \bw) \xt_i \leq \bb(\bu_i, \bw) , \quad\! \bG(\bu_i, \bw) \xt_i =    \bh(\bu_i, \bw), & i = 1,\ldots,N \label{eqn:IOP_outer}\\[-.0em]
&\bx^*_i \in \argmin_\bx \left\{\: \bc(\bu_i, \bw)^T\bx \;\bigg|\; 
                      \begin{aligned}
                       \bA(\bu_i, \bw)\bx \leq \bb(\bu_i, \bw)\\
                       \bG(\bu_i, \bw)\bx =    \bh(\bu_i, \bw)
                      \end{aligned} \: \right\}, & i = 1,\ldots,N \label{eqn:IOP_inner}
\end{align}
\end{subequations}\\[-.3em]
\noindent where $r(\bw)$ simply denotes an optional regularization term such as $r(\bw) = \| \bw \|^2$ and $\cW \subseteq \bbR^{K}$ denotes additional problem-specific prior knowledge, if applicable (similar constraints are standard in the IO literature \citep{Keshavarz11,Chan19}). The `inner' problem \eqref{eqn:IOP_inner} generates predictions $\xp_i$ by solving $N$ independent LPs. The `outer' problem tries to make these predictions consistent with the targets $\xp_i$ while also satisfying target feasibility (\ref{eqn:IOP_outer}).

Difficulties may arise, in principle and in practice. An inner LP may be infeasible or unbounded for certain $\bw \in \cW$, making $\ell$ undefined. Even if all $\bw \in \cW$ produce feasible and bounded LPs, an algorithm for solving~\eqref{eq:ILOP} may still attempt to query $\bw \notin \cW$. The outer problem as a whole may be subject to local minima due to non-convex objective and/or constraints, depending on the problem-specific parametrizations. We propose gradient-based techniques for the outer problem (Section~\ref{sec:algorithms}), but $\td[\ell]{\bw}$ may not exist or may be non-unique at certain $\bu_i$ and $\bw$ (Section~\ref{sec:iograd}). 

Nonetheless, we find that tackling this formulation leads to practical algorithms. To the best of our knowledge, \eqref{eq:ILOP} is the most general formulation of inverse linear parametric programming. It subsumes the non-parametric cases that have received much interest in the IO literature. 

{\bf Choice of loss function \;} The IO literature considers {\em decision error}, which penalizes difference in decision variables, and {\em objective error}, which penalizes difference in optimal objective value~\citep{Babier19}. A fundamental issue with decision error, such as {\em squared decision error} (SDE) $\ell(\xp, \xt) = \frac{1}{2}\| \xp_i - \xt_i \|^2$, is that when $\xp$ is non-unique the loss is also not unique; this issue was also a motivation for the ``Smart Predict-then-Optimize'' paper \citep{Elmachtoub19}. An objective error, such as {\em absolute objective error} (AOE) $\ell(\xp, \xt, \bc) = | \bc^T(\xt_i - \xp_i) |$, is unique even if $\bx^*$ is not. We evaluate AOE using imputed cost $\bc(\bu, \bw)$ during training; this usually requires at least some prior knowledge $\cW$ to avoid trivial cost vectors, as in \citet{Keshavarz11}.

{\bf Target feasibility \;} Constraints~\eqref{eqn:IOP_outer} explicitly enforce target feasibility $\bA \xt_i \leq \bb,\, \bG \xt_i = \bh$ in any learned PLP. 
The importance of these constraints can be understood through Figure~\ref{fig:example1}, where hypothesis $\bw_1$ achieves $\text{AOE}\!=\!0$ since $\xt$ and $\xp$ are on the same hyperplane, despite $\xt$ being infeasible. \citet{Chan19} show that if the feasible region is bounded then for any infeasible $\xt$ there exists a cost vector achieving $\text{AOE}\!=\!0$.

{\bf Unbounded or infeasible subproblems \;} Despite~\eqref{eqn:IOP_outer}, an algorithm for solving~\eqref{eq:ILOP} may query a $\bw$ for which an LP in~\eqref{eqn:IOP_inner} is itself infeasible and/or unbounded, in which case a finite $\xp$ is not defined. We can extend \eqref{eq:ILOP} to explicitly account for these special cases (by penalizing a measure of infeasibility~\citep{murty2000infeasibility}, and penalizing unbounded directions when detected) but in our experiments simply evaluating the (large) loss for an arbitrary $\bx^*$ returned by our interior point solver worked nearly as well at avoiding such regions of $\cW$, so we opt to keep the formulation simple.

{\bf Noisy observations \;}
Formulation \eqref{eq:ILOP} can be extended to handle measurement noise. For example, individually penalized non-negative slack variables can be added to the right-hand sides of~\eqref{eqn:IOP_outer} as in a soft-margin SVM \citep{cortes1995support}. Alternatively, a norm-penalized group of slack variables can be added to each $\xt_i$ on the left-hand side of~\eqref{eqn:IOP_outer}, softening targets in decision space. We leave investigation of noisy data and model-misspecification as future work.

\subsection{Learning Linear Programs with Sequential Quadratic Programming} \label{sec:algorithms}

We treat \eqref{eq:ILOP} as a {\em non-linear programming} (NLP) problem, making as few assumptions as possible. We focus on {\em sequential quadratic programming} (SQP), which aims to solve NLP problems iteratively. Given current iterate $\bw^{k}$, SQP determines a search direction $\bdelta^k$ and then selects the next iterate $\bw^{k+1} = \bw^k + \alpha \bdelta^k$ via line search on $\alpha > 0$. Direction $\bdelta^k$ is the solution to a quadratic program.
\begin{equation*} \label{eq:nlp}
\begin{aligned}
\textstyle \minimize_{\bw} \;\; & f(\bw)  &\quad& & 
\textstyle \minimize_{\bdelta} \;\; & \grad f(\bw^{k})^T \bdelta + \bdelta^T \bB^{k}\bdelta \\[-.1em]
\subjto         \;\; & \bg(\bw) \leq \bzero \quad\; \text{(NLP)}&& &
\subjto         \;\; & \grad \bg(\bw^k)^T \bdelta + \bg(\bw^k) \leq \bzero  \quad\; \text{(SQP)}\\[-.1em]
                      & \bh(\bw) =    \bzero && &
                      & \grad \bh(\bw^k)^T \bdelta + \bh(\bw^k) =    \bzero
\end{aligned}
\end{equation*}
Each instance of subproblem (SQP) requires evaluating constraints\footnote{NLP constraint vector $\bh(\bw)$ is not the same as FOP right-hand side $\bh(\bu, \bw)$, despite same symbol.} and their gradients at $\bw^k$, as well as the gradient of the objective. Matrix $\bB^k$ approximates the Hessian of the Lagrange function for (NLP), where $\bB^{k+1}$ is typically determined from the gradients by a BFGS-like update. Our experiments use an efficient variant called {\em sequential least squares programming} (SLSQP) \citep{schittkowski1982nonlinear2,kraft1988software} which exploits a stable $LDL$ factorization of $\bB$.

The NLP formulation of~\eqref{eq:ILOP} has $N M_1$ inequality and $N M_2$ equality constraints from \eqref{eqn:IOP_outer}:
\begin{equation*}
\begin{aligned}
\bg(\bw) = \begin{bmatrix}
  \bA(\bu_i, \bw) \xt_i - \bb(\bu_i, \bw)
  \end{bmatrix}_{i=1}^{M_1},\quad
&& 
\bh(\bw) = \begin{bmatrix}
  \bG(\bu_i, \bw) \xt_i - \bh(\bu_i, \bw)
  \end{bmatrix}_{i=1}^{M_2}.
\end{aligned}
\end{equation*}
plus any constraints needed to enforce $\bw \in \cW$. The NLP constraint residuals and their gradients $\grad \bg(\bw),\grad \bh(\bw)$ can be directly evaluated. Evaluating $f(\bw) = \frac{1}{N}\sum_{i=1}^{N} \ell(\bx^*_i, \xt_i\!, \bu_i, \bw) + r(\bw)$ requires solving each LP in~\eqref{eqn:IOP_inner}. Finally, evaluating $\grad f(\bw)$ requires evaluating vector-Jacobian product $\td[\ell]{\bw} = \pd[\ell]{\bw} + \pd[\ell]{\xp_i}\pd[\xp_i]{\bw}$ for each $i$, which requires differentiating through the LP optimization that produced $\xp_i$ from $\bu_i$ and $\bw$. That is exactly what we do, and this approach allows us to tackle~\eqref{eq:ILOP} directly in its bi-level form, using powerful gradient-based NLP optimizers like SQP as the `outer' solver. Section~\ref{sec:iograd} compares methods for the differentiating through LP optimization. 

{\bf Redundant NLP constraints \;} When PLP model parameters $\bw$ have fixed dimension, the NLP formulation of \eqref{eq:ILOP} can involve many redundant constraints, roughly in proportion to $N$. Indeed, if $\cW \subseteq \bbR^K$ and $K < N M_2$ the equality constraints may appear to over-determine $\bw$, treating (NLP) as a feasibility problem; but, due to redundancy $\bw$ is not uniquely determined. The ease or difficulty of removing redundant constraints from (NLP) depends on the domain-specific parametrizations of PLP constraints $\bA(\bu, \bw), \bb(\bu, \bw), \bG(\bu, \bw),$ and $\bh(\bu, \bw)$. Equality constraints that are affinely-dependent on $\bw$ can be eliminated from (NLP) by a simple pseudoinverse technique, resulting in a lower-dimensional problem; this also handles the case where (NLP) is not strictly feasible in $\bh(\bw) = \bzero$ (either due to noisy observations or model misspecification) by automatically searching only among $\bw$ that exactly minimize the sum of squared residuals $\|\bh(\bw)\|^2$. If equality constraints are polynomially-dependent on $\bw$, we can eliminate redundancy by Gr\"obner basis techniques~\citep{cox2013ideals} although, unlike the affine case, it may not be possible or beneficial to reparametrize-out the new non-redundant basis constraints from the NLP. Redundant inequality constraints can be either trivial or costly to identify~\citep{telgen1983identifying}, but are not problematic. See Appendix for details.

{\bf Benefit over gradient-free methods \;} Evaluating $f(\bw)$ is expensive in our NLP because it requires solving $N$ linear programs. To understand why access to $\grad f(\bw)$ is important in this scenario, it helps to contrast SQP with a well-known gradient-free NLP optimizer such as COBYLA~\citep{powell1994direct}. For $K$-dimensional NLP, COBYLA maintains $K+1$ samples of $f(\bw), \bg(\bw), \bh(\bw)$ and uses them as a finite-difference approximation to $\grad f(\bw^k), \grad g(\bw^k), \grad h(\bw^k)$ where $\bw^k$ is the current iterate (best sample). The next iterate $\bw^{k+1}$ is computed by optimizing over a trust region centered at~$\bw^k$. COBYLA recycles past samples to effectively estimate `coarse' gradients, whereas SQP uses gradients directly. Figure~\ref{fig:example1algorithms} shows SLSQP and COBYLA running on the example from Figure~\ref{fig:example1}.

\begin{figure}[t]
\hspace{-2.3em}\includegraphics[width=.98\textwidth]{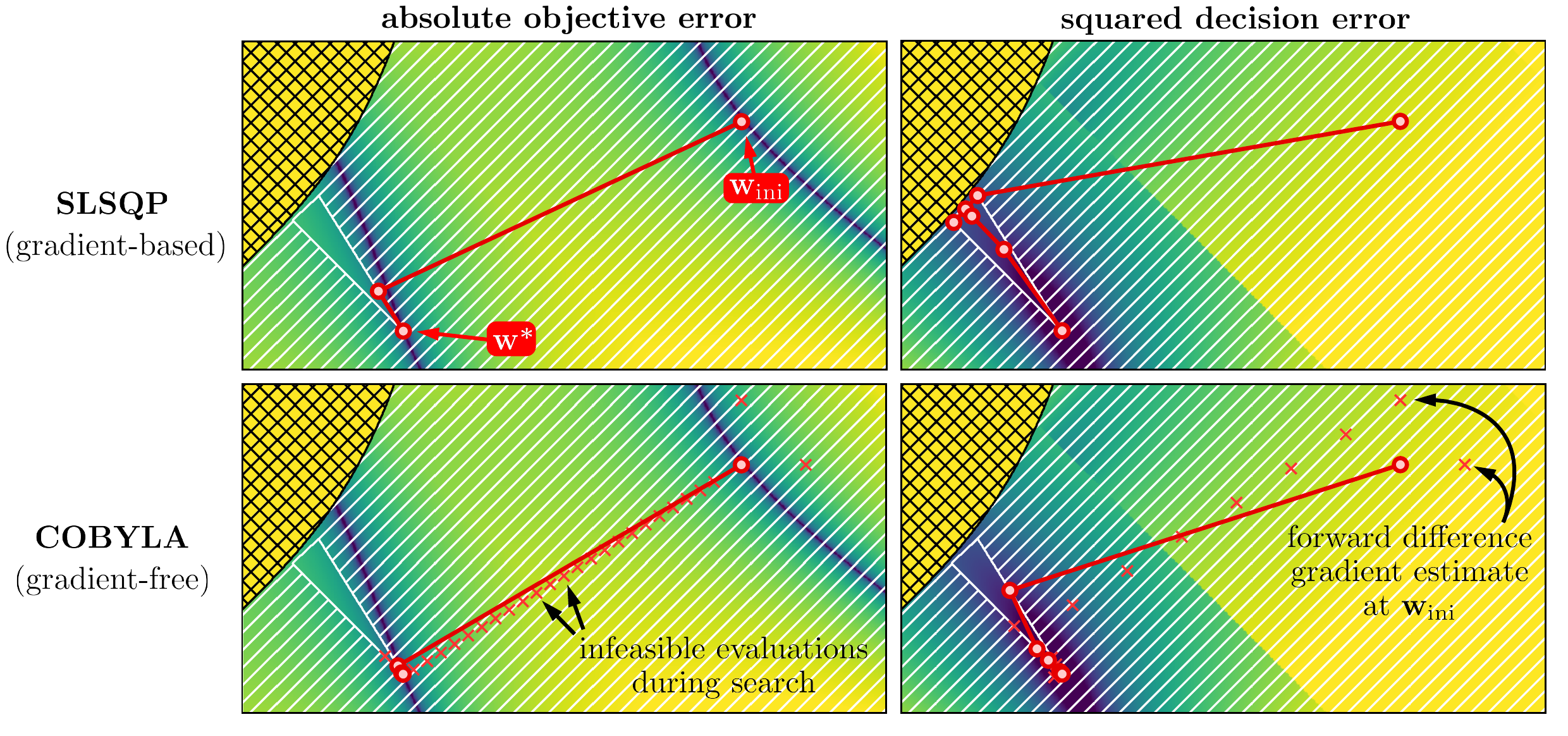}\vspace{-.5em}
\caption{An illustration of how SLSQP and COBYLA solve the simple learning problem in Figure~\ref{fig:example1} for the AOE and SDE loss functions. Each algorithm first tries to satisfy the NLP constraints $\bg(\bw) \leq \bzero$ (triangle-shaped feasible region in $\bw$-space), then makes progress minimizing $f(\bw)$. \label{fig:example1algorithms} \vspace{-1em}}
\end{figure}


\vspace{-.3em}
\subsection{Computing Loss Function Gradients}\vspace{-.3em}\label{sec:iograd}
If, at a particular point $(\bu_i, \bw)$, each corresponding vector-Jacobian product $\pd[\ell]{\xp_i}\pd[\xp_i]{\bw}$ exists, is unique, and can be computed, then we can construct~(SQP) at each step. For convenience, we assume that $(\bc, \bA, \bb, \bG, \bh)$ are expressed in terms of $(\bu, \bw)$ within an automatic differentiation framework such as PyTorch, so all that remains is to compute Jacobians $(\pd[\ell]{\bc}, \pd[\ell]{\bA}, \pd[\ell]{\bb}, \pd[\ell]{\bG}, \pd[\ell]{\bh})$ at each $(\bu_i, \bw)$ as an intermediate step at the outset of backpropagation. We consider three approaches: 
\begin{enumerate}[itemsep=-.05em,topsep=-.18em,leftmargin=1.75cm]
\item[{\em backprop:}]{backpropagate through the steps of the homogeneous interior point algorithm for LPs,}
\item[{\em implicit:}]{specialize the implicit differentiation procedure of~\citet{Amos2017} to LPs, and}
\item[{\em direct:}]{evaluate gradients directly, in closed form (for objective error only).}
\end{enumerate}

We implemented a batch PyTorch version of the homogeneous interior point algorithm~\citep{andersen2000mosek,xu1996simplified} developed for the MOSEK optimization suite and currently the default linear programming solver in SciPy~\citep{2020SciPy}. Our implementation is also efficient in the backward pass, for example re-using the $LU$ decomposition\footnote{Cholesky decomposition is also supported and re-used, but we use $LU$ decomposition in experiments.} from each Newton step.

For implicit differentiation we follow~\citet{Amos2017} by forming the system of linear equations that result from differentiating the KKT conditions and then inverting that system to compute the needed vector-Jacobian products. For LPs this system can be poorly conditioned, especially at strict tolerances on the LP solver, but in practice it provides useful gradients.

For direct evaluation (in the case of objective error), we use Theorem~\ref{thm:oegrad}. 
When $\ell$ is AOE loss, by chain rule we can multiply each quantity by $\pd[\ell]{z} = \mathrm{sign}(z)$ to get the needed Jacobians.

\begin{theorem}  \label{thm:oegrad}
Let $\bx^* \in \bbR^D$ be an optimal solution to \eqref{eq:LP} and let $\blambda^* \in \bbR^{M_1}_{\leq 0}, \bnu^* \in \bbR^{M_2}$ be an optimal solution to the associated dual linear program. If $\bx^*$ is non-degenerate then the objective error $z = \bc^T (\xt - \bx^*)$ is differentiable and the total derivatives\footnote{In slight abuse of notation, we ignore leading singleton dimension of $\pd[z]{\bA} \in \bbR^{1 \times M_1 \times D}, \pd[z]{\bG} \in \bbR^{1 \times M_2 \times D}$.} are
\begin{equation*}
\begin{aligned}\\[-2.0em]
       \textstyle \pd[z]{\bc} &= \left(\xt - \bx^*\right)^T
&\quad \textstyle \pd[z]{\bA} &=  \blambda^*\bx^{*T}
&\quad \textstyle \pd[z]{\bb} &= -\blambda^{*T}
&\quad \textstyle \pd[z]{\bG} &= \bnu^* \bx^{*T}
&\quad \textstyle \pd[z]{\bh} &= -\bnu^{*T}.\\[-.6em]
\end{aligned}
\end{equation*}
\end{theorem}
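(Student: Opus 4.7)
The plan is to exploit LP strong duality together with the observation that, under non-degeneracy, the primal variable $\bx^*$ and the dual variables $(\blambda^*, \bnu^*)$ are determined by two decoupled square linear systems, which enables a clean sensitivity calculation.

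First I would establish local smoothness. Under non-degeneracy the active inequality set $\mathcal{A}$ is locally constant in the LP data (by continuity of $\bx^*$ together with strict complementarity), and the stacked matrix $\mathbf{M} = \begin{bmatrix}\bA_\mathcal{A}\\ \bG\end{bmatrix}$ is square and invertible. The KKT conditions restricted to the active set then read
\[
\mathbf{M}\, \bx^* = \begin{bmatrix}\bb_\mathcal{A}\\ \bh\end{bmatrix}, \qquad \mathbf{M}^T \begin{bmatrix}\blambda_\mathcal{A}^*\\ \bnu^*\end{bmatrix} = \bc, \qquad \blambda_{\bar{\mathcal{A}}}^* = \bzero.
\]
So $\bx^*$ is a smooth function of $(\bA,\bb,\bG,\bh)$ only, while $(\blambda^*,\bnu^*)$ is a smooth function of $(\bc,\bA,\bG)$ only. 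Substituting the dual equation into $\bc^T\bx^*$ and using the primal equation together with complementary slackness yields the strong-duality identity $\bc^T\bx^* = \blambda^{*T}\bb + \bnu^{*T}\bh$.

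Next I would compute each partial of $z = \bc^T\xt - \bc^T\bx^*$. For $\pd[z]{\bc}$, local independence of $\bx^*$ from $\bc$ immediately gives $\pd[z]{\bc} = (\xt - \bx^*)^T$. For $\pd[z]{\bb}$ and $\pd[z]{\bh}$, the duality identity together with the independence of $(\blambda^*,\bnu^*)$ from $(\bb,\bh)$ yields $\pd[z]{\bb} = -\blambda^{*T}$ and $\pd[z]{\bh} = -\bnu^{*T}$ by inspection. For $\pd[z]{\bA}$, I differentiate the primal equation $\mathbf{M}\,\bx^* = [\bb_\mathcal{A}; \bh]$ with respect to $\bA$, then use $\bc^T \mathbf{M}^{-1} = [\blambda_\mathcal{A}^{*T},\, \bnu^{*T}]$ from the dual equation along with $\blambda_{\bar{\mathcal{A}}}^* = \bzero$ to extend the sum over all rows of $\bA$; this gives $\pd{\bA}(\bc^T\bx^*) = -\blambda^*\bx^{*T}$ and hence $\pd[z]{\bA} = \blambda^*\bx^{*T}$. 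The case of $\bG$ is entirely analogous.

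The main obstacle is securing local constancy of the active set, which is precisely where the non-degeneracy assumption does its work. Once that is in hand, via strict complementarity and the implicit function theorem applied to the square active-constraint system, the remaining computations reduce to elementary linear algebra on the two decoupled systems above; the sign convention $\blambda^* \leq 0$ must be tracked carefully when verifying strong duality and stationarity.
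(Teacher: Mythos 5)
Your end formulas are correct and your computations are sound in the regime you actually analyze; the strong-duality shortcut (differentiating $\bc^T\bx^* = \bb^T\blambda^* + \bh^T\bnu^*$ after observing that the duals are locally independent of $(\bb,\bh)$ and the primal is locally independent of $\bc$) is a clean way to read off $\pd[z]{\bc}$, $\pd[z]{\bb}$, $\pd[z]{\bh}$. However, there is a genuine gap in your setup: the regularity you assume does not follow from the theorem's hypothesis. Non-degeneracy of $\bx^*$ in the Tijssen--Sierksma sense used here guarantees only that the active constraints at $\bx^*$ are linearly independent, hence number at most $D$. It does \emph{not} give strict complementarity---active inequalities with $\lambda_i^* = 0$ are allowed---and it does not force exactly $D$ active constraints, so your stacked matrix $\mathbf{M}$ need not be square: with fewer than $D$ active rows the system $\mathbf{M}\bx^* = [\bb_{\mathcal{A}};\bh]$ underdetermines $\bx^*$, and $\bx^*$ need not even be unique. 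These are not ignorable corner cases for this paper: the theorem is applied to interior-point solutions, which converge to the analytic center of the optimal face, so non-unique $\bx^*$ with a rank-deficient active system is the \emph{expected} situation (Corollary~\ref{thm:oeunique} is devoted to precisely when the gradients are unique). Without strict complementarity the active set is not locally constant, and without a square invertible $\mathbf{M}$ the implicit function theorem does not apply, so your first paragraph does not establish differentiability of $z$ under the stated hypothesis.

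The paper's proof sidesteps both problems by never requiring $\bx^*$ to be a locally smooth, unique function of the data. It differentiates the KKT equalities to obtain a linear system in the differentials, partitions the inequality indices into $\cI$ ($\lambda_i^*<0$, active), $\cJ$ (inactive), and $\cK$ ($\lambda_i^*=0$ but active), and proves the identity $\bc^T\d\bx = \blambda^{*T}(\d\bb - \d\bA\,\bx^*) + \bnu^{*T}(\d\bh - \d\bG\,\bx^*)$ whenever the differentiated system is merely \emph{feasible} in $\d\bx$---feasibility being secured by linear independence of the active rows, i.e., by non-degeneracy alone, with no uniqueness or strict complementarity needed. That is the step your argument is missing. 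To rescue your route you would have to either add strict complementarity and uniqueness of $\bx^*$ as hypotheses (weakening the theorem below what the paper needs), or replace the implicit-function-theorem step with an argument that tolerates $\cK \neq \emptyset$ and a wide, full-row-rank active system.
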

Gradients $\pd[z]{\bb}$ and $\pd[z]{\bh}$ for the right-hand sides are already well-known as {\em shadow prices}. If $\bx^*$ is degenerate then the relationship between shadow prices and dual variables breaks down, resulting in two-sided shadow prices \citep{Strum69,Aucamp82}.

We use degeneracy in the sense of \citet{tijssen1998balinski}, where a point on the relative interior of the optimal face need not be degenerate, even if there exists a degenerate vertex on the optimal face. This matters when $\bx^*$ is non-unique because interior point methods typically converge to the analytical center of the relative interior of the optimal face~\citep{Zhang94}. Tijssen and Sierskma also give relations between degeneracy of $\bx^*$ and uniqueness of $\blambda^*, \bnu^*$, which we apply in Corollary~\ref{thm:oeunique}. When the gradients are non-unique, this corresponds to the subdifferentiable case.

\begin{corollary} \label{thm:oeunique}
In Theorem~\ref{thm:oegrad}, both $\pd[z]{\bb}$ and $\pd[z]{\bh}$ are unique,
$\pd[z]{\bc}$ is unique if and only if $\bx^*$ is unique, and 
both $\pd[z]{\bA}$ and $\pd[z]{\bG}$ are unique if and only if $\bx^*$ is unique or $\bc = \bzero$.
\end{corollary}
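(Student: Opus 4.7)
The plan is to leverage Theorem~\ref{thm:oegrad}'s explicit formulas together with the primal-dual relations of \citet{tijssen1998balinski}, treating each of the four claims as a separate but linked case. Since the hypothesis inherited from Theorem~\ref{thm:oegrad} is non-degeneracy of $\bx^*$, the first move is to recall the Tijssen--Sierksma result that non-degeneracy of a point in the relative interior of the optimal primal face forces uniqueness of the dual optimum $(\blambda^*, \bnu^*)$. With that in hand, the question becomes which of the five gradient expressions depend on the (possibly non-unique) choice of $\bx^*$.

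The two easy pieces come first. Because $\pd[z]{\bb} = -\blambda^{*T}$ and $\pd[z]{\bh} = -\bnu^{*T}$ involve only dual variables, uniqueness of the dual optimum immediately makes them unique. For $\pd[z]{\bc} = (\xt - \bx^*)^T$, the target $\xt$ is a fixed observation, so uniqueness of this gradient is equivalent to uniqueness of $\bx^*$: if $\bx^*_1 \neq \bx^*_2$ are two optimal primal solutions, then $\xt - \bx^*_1 \neq \xt - \bx^*_2$, and conversely if $\bx^*$ is unique the formula is a single value.

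The interesting case is $\pd[z]{\bA} = \blambda^* \bx^{*T}$ and $\pd[z]{\bG} = \bnu^* \bx^{*T}$, which are outer products of a unique dual vector with a potentially non-unique primal optimum. For the ``if'' direction I would argue two sub-cases. If $\bx^*$ is unique, the outer products are trivially unique. If instead $\bc = \bzero$, then dual feasibility reduces to $\bA^T \blambda + \bG^T \bnu = \bzero$, which admits $(\blambda, \bnu) = (\bzero, \bzero)$; this dual-feasible pair attains dual objective $0$, matching the primal optimum (trivially $0$), so it is dual-optimal, and by the uniqueness from the first paragraph it is the only dual optimum. Hence $\blambda^* = \bzero$ and $\bnu^* = \bzero$, making both outer products the zero matrix regardless of which $\bx^*$ is chosen. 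For the ``only if'' direction, suppose $\bx^*$ is non-unique and $\bc \neq \bzero$. Stationarity $\bA^T\blambda^* + \bG^T\bnu^* = -\bc \neq \bzero$ forces at least one of $\blambda^*, \bnu^*$ to be nonzero; say $\blambda^* \neq \bzero$ (the $\bnu^*$ case is symmetric). Taking two distinct optimal primal solutions $\bx^*_1 \neq \bx^*_2$ yields $\blambda^*(\bx^*_1 - \bx^*_2)^T \neq \bzero$ since an outer product of two nonzero vectors is nonzero, so $\pd[z]{\bA}$ takes two different values. Thus at least one of $\pd[z]{\bA}, \pd[z]{\bG}$ is non-unique, establishing the contrapositive.

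The main subtlety I expect is the $\bc = \bzero$ sub-case: one must be a little careful that ``dual uniqueness'' obtained from Tijssen--Sierksma genuinely rules out other nonzero dual optima whose outer product with a varying $\bx^*$ could otherwise be nonzero. The argument hinges on the two dual-objective values coinciding at $(\bzero,\bzero)$ so that this pair is optimal, after which the pre-established uniqueness closes the gap. A secondary bookkeeping issue is making sure the sign convention $\blambda^* \in \bbR^{M_1}_{\leq 0}$ of Theorem~\ref{thm:oegrad} is used consistently in the dual-feasibility equation; once that is fixed, the rest is algebra.
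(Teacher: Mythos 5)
Your proposal is correct and follows essentially the same route as the paper: Tijssen--Sierksma dual uniqueness under non-degeneracy handles $\pd[z]{\bb}$ and $\pd[z]{\bh}$, the formula $(\xt-\bx^*)^T$ ties $\pd[z]{\bc}$ to uniqueness of $\bx^*$, and the outer-product case analysis on whether $(\blambda^*,\bnu^*)$ vanishes settles $\pd[z]{\bA}$ and $\pd[z]{\bG}$. If anything, you are slightly more explicit than the paper on the ``$\bc=\bzero$ with $\bx^*$ non-unique'' sub-case, where you correctly invoke weak duality plus dual uniqueness to force $(\blambda^*,\bnu^*)=(\bzero,\bzero)$.
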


\vspace{-.5em}
\section{Experiments}\label{sec:experiments}\vspace{-.3em}

We evaluate our approach by learning a range of synthetic LPs and parametric instances of minimum-cost multi-commodity flow.
Use of synthetic instances is common in IO (e.g., \citet{Ahuja01, Keshavarz11, Dong18}) and there are no community-established and readily-available benchmarks, especially for more general formulations.
Our experimental study considers instances not directly addressable by previous IO work, either because we learn all coefficients jointly or because the parametrization results in non-convex NLP. 

We compare three versions\footnote{For completeness we also evaluated finite-differences (\SQPdiff) which, unsurprisingly, was not competitive.} of our gradient-based method (\SQPbprop, \SQPimpl, \SQPdir) with two gradient-free methods: random search (RS) and COBYLA. The main observation is that the gradient-based methods perform similarly and become superior to gradient-free methods as the dimension~$K$ of parametrization $\bw$ increases.  We find that including a black-box baseline like COBYLA is important for assessing the practical difficulty of an IO instance (and encourage future papers to do so) because such methods work reasonably well in low-dimensional problems. A second observation is that generalization to testing conditions is difficult because the discontinuous nature of LP decision space creates an underfitting phenomenon. This may explain why many previous works in IO require a surprising amount of training data for so few model parameters (see end of Section~\ref{sec:experiments}). A third observation is that there are instances for which no method succeeds at minimizing training error 100\% of the time. Our method can therefore be viewed as a way to significantly boost the probability of successful training, when combined with simple global optimization strategies such as multi-start.

Experiments used PyTorch~v1.6 nightly build, the COBYLA and SLSQP wrappers from SciPy~v1.4.1, and were run on an Intel Core i7 with 16GB RAM. (We do not use GPUs, though our PyTorch interior point solver inherits GPU acceleration.) We do not regularize $\bw$ nor have any other hyperparameters.

\begin{figure}[t]
\centering
    \begin{subfigure}{\textwidth}
      \centering
          \includegraphics[width=\textwidth]{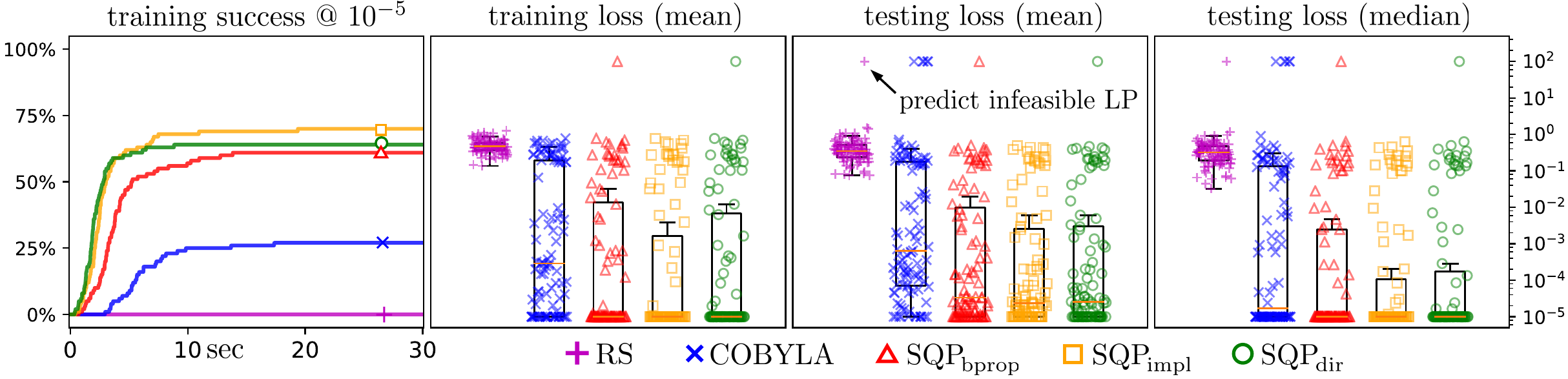}\vspace{-.5em}
    \end{subfigure} 
\caption{A comparison on synthetic PLP instances. Shown is the probability of achieving zero AOE training loss over time (curves), along with final training and testing loss (box plots). Each mark denotes one of 100 trials (different instances) with  20 training and testing points ($D\!=\!10, M_1\!=\!80$). The AOE testing loss is always evaluated with the `true' cost $\bc$, never the imputed cost. For insight into why the mean testing error is larger than median testing error, see discussion (end of Section~\ref{sec:experiments}).%
\label{fig:exp_1a}\vspace{-.5em}}%
\end{figure}

{\bf Learning linear programs \;}  We used the LP generator of~\citet{tan2019dio}, modifying it to create a more challenging variety of feasible regions; their code did not perform competitively in terms of runtime or success rate on these harder instances, and cannot be applied to AOE loss.
Fig.~\ref{fig:exp_1a} shows the task of learning ($\bc$, $\bA$, $\bb$) with a $K\!=\!6$ dimensional parametrization $\bw$, a $D\!=\!10$ dimensional decision space $\bx$, and 20 training observations. RS fails; COBYLA `succeeds' on ~25\% of instances; SQP succeeds on 60-75\%, which is substantially better. The success curve of \SQPbprop slightly lags those of \SQPimpl and \SQPdir due to the overhead of backpropagating through the steps of the interior point solver. See Appendix for five additional problem sizes, where overall the conclusions are the same.  
Surprisingly, $\SQPimpl$ works slightly better than $\SQPbprop$ and $\SQPdir$ in problems with higher $D$. We observe similar performance on instances with equality constraints, where $\bG$ and $\bh$ also need to be learned (see Appendix). Note that each RS trial returns the best of (typically) thousands of $\bw$ settings evaluated during the time budget, all sampled uniformly from the same $\cW$ from which the `true' synthetic PLP was sampled. Most random (and thus initial) points do not satisfy~\eqref{eqn:IOP_outer}.

Learning $(\bc, \bA, \bb)$ directly, so that $\bw$ comprises all LP coefficients, results in a high-dimensional NLP problem (which is why, to date, the IO literature has focused on special cases of this problem, either with a single $\xt$ \citep{Chan18,Chan19} or fewer coefficients to learn \citep{Ghobadi20}). For example, an instance with $D\!=\!10, M_1\!=\!80$ has $890$ adjustable parameters. \SQPbprop, \SQPimpl and \SQPdir consistently achieve zero AOE training loss,  while RS and COBYLA consistently fail to make learning progress given the same time budget (see Appendix). 

\begin{figure}[b]
\includegraphics[width=\textwidth]{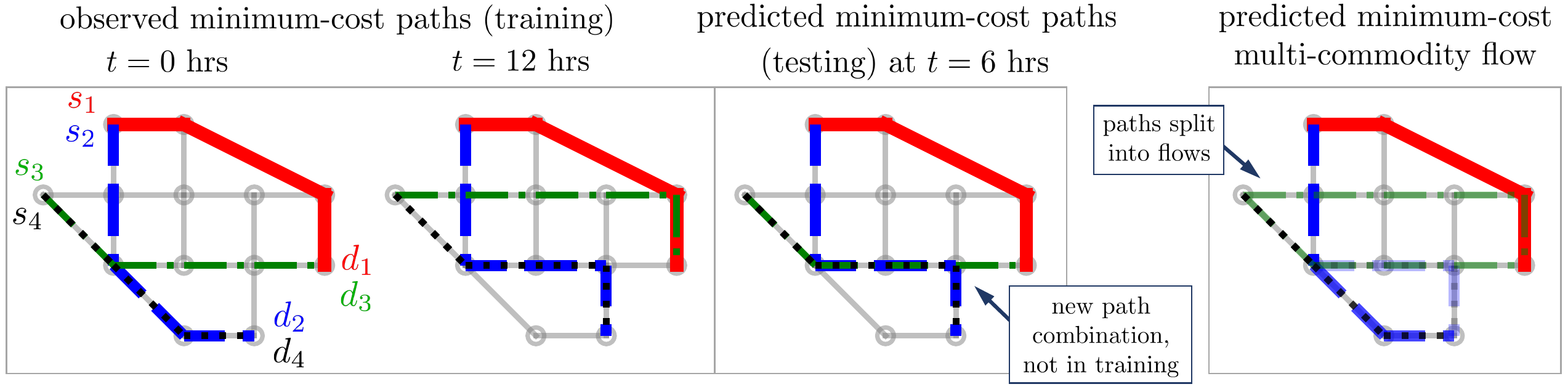}\vspace{-.5em}
\caption{A visualization of minimum-cost paths (for simplicity) and minimum-cost multi-commodity flows (our experiment) on the Nguyen-Dupuis network. Sources $\{s_1, s_2, s_3, s_4\}$ and destinations $\{d_1, d_2, d_3, d_4\}$ are shown. At left are two example sets of training paths $\{(t_1, \xt_1), (t_2, \xt_2)\}$ alongside an example of a correctly predicted set of optimal paths under different conditions (different~$t$). At right is a visualization of a correctly predicted optimal flow, where color intensity indicates proportion of flow along arcs.  \label{fig:nguyen}}
\end{figure}

\begin{figure}[t]
\centering
    \begin{subfigure}{\textwidth}
      \centering
          \includegraphics[width=\linewidth]{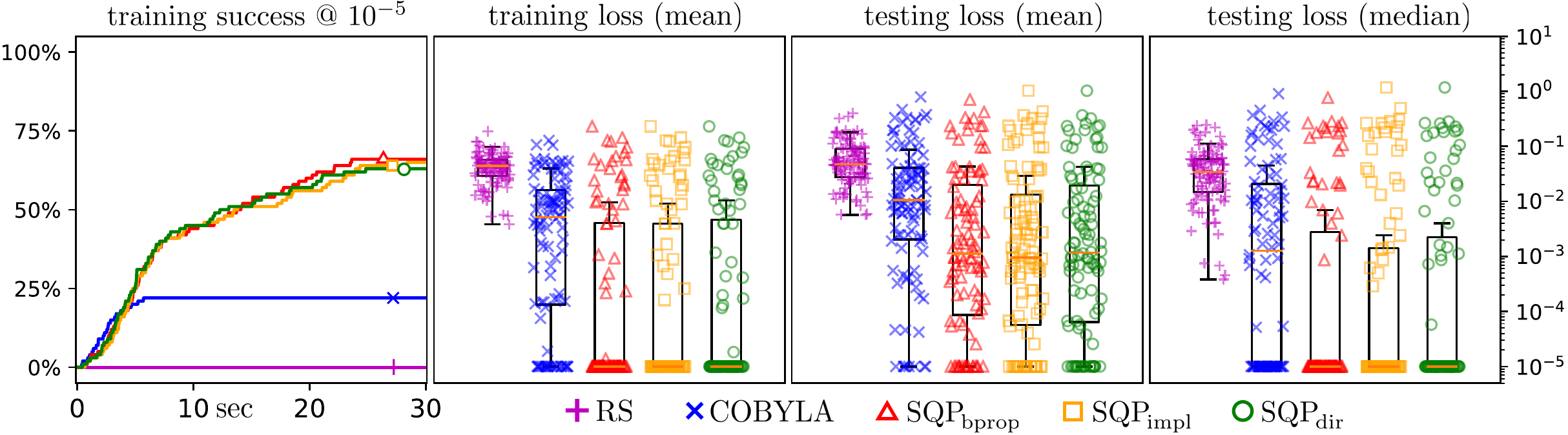}
    \end{subfigure} \vspace{-.4em} 
\caption{A comparison on minimum-cost multi-commodity flow instances, similar to Fig.~\ref{fig:exp_1a}. \label{fig:multicommodityResults}}
\end{figure}

{\bf Learning minimum-cost multi-commodity flow problems \;} 
Fig.~\ref{fig:nguyen} shows a visualization of our experiment on the Nguyen-Dupuis graph \citep{Nguyen84}. 
We learn a periodic arc cost $c_j(t, l_j, p_j) = l_j +  w_1 p_j + w_2l_j(\sin(2\pi(w_3+w_4t+w_5l_j))+1)$ and an affine arc capacity $b_j(l_j) = 1+ w_6 + w_7l_j$, based on global feature $t$ (time of day) and arc-specific features $l_j$ (length) and $p_j$ (toll price). To avoid trivial solutions, we set $\cW = \{\bw \ge \mathbf{0}, w_3+w_4+w_5 = 1\}$. 
Results on 100 instances are shown in Fig.~\ref{fig:multicommodityResults}.
The SQP methods outperform RS and COBYLA in training and testing loss. From an IO perspective the fact that we are jointly learning costs and capacities  in a non-convex NLP formulation is already quite general. Again, for higher-dimensional parametrizations, we can expect the advantage of gradient-based methods to get stronger. 

We report both the mean and median loss over the testing points in each trial. The difference in mean and median testing error is due to the presence of a few `outliers' among the test set errors. Fig.~\ref{fig:decisionsurface} shows the nature of this failure to generalize: the decision map $\bu \mapsto \xp$ of a PLP has discontinuities, so the training data can easily under-specify the set of learned models that can achieve zero training loss, similar to the scenario that motivates max-margin learning in SVMs. It is not clear what forms of regularization $r(\bw)$ will reliably improve generalization in IO. Fig.~\ref{fig:decisionsurface} also suggests that training points which closely straddle discontinuities are much more `valuable' from a learning perspective.
\begin{figure}
\hspace{-.5em}\includegraphics[width=1\textwidth]{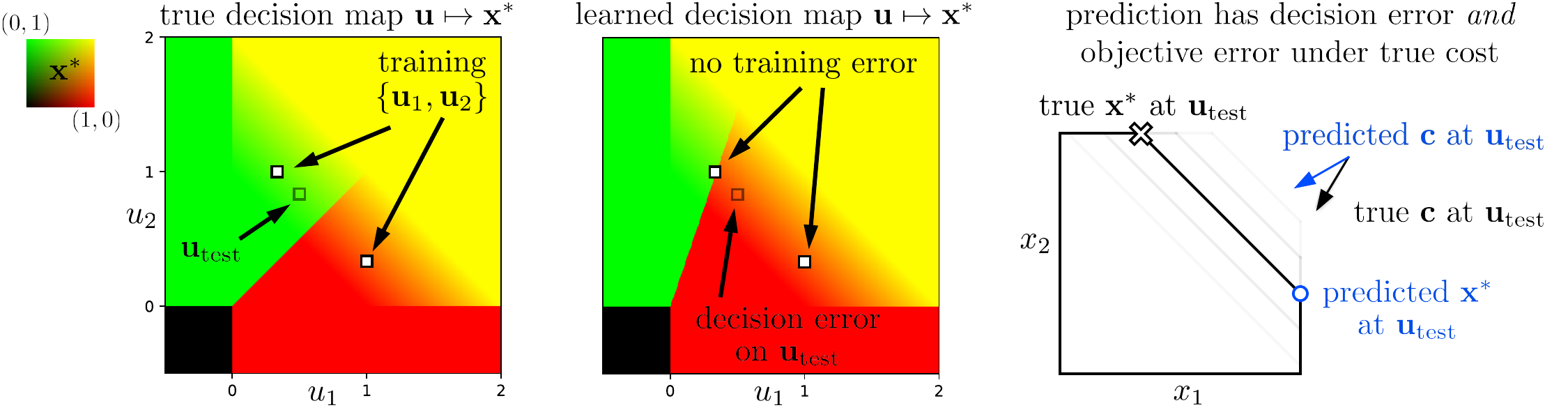}\\[-2em]
\small \phantom{~~~~~~~~~} (a) \hspace{1.33in} (b) \hspace{1.4in} (c)\vspace{.25em}
\caption{A failure to generalize in a learned PLP. Shown are the optimal decision map $\bu \mapsto \bx^*$ for a ground-truth PLP (a) and learned PLP (b) with the value of components $(\textcolor[rgb]{0.85,0,0}{x_1^*}, \textcolor[rgb]{0,0.7,0}{x_2^*})$ represented by red and green intensity respectively, along with that of a PLP trained on $\{\bu_1, \bu_2\}$. The learned PLP has no training error ($\text{SOE}\!=\!0, \text{AOE}\!=\!0$) but large test error ($\text{SOE}\!=\!.89, \text{AOE}\!=\!.22$) as depicted in~(c). (See Appendix for the specific PLP used in this example.) \label{fig:decisionsurface}\vspace{-1em}}
\end{figure}

\vspace{-.4em}
\section{Conclusion}\label{sec:conclusion} \vspace{-.35em}
In this paper, we propose a novel bi-level formulation and gradient-based framework for learning linear programs from optimal decisions. The methodology learns all parameters jointly while allowing flexible parametrizations of costs, constraints, and loss functions---a generalization of the problems typically addressed in the inverse linear optimization literature. 

Our work facilitates a strong class of inductive priors, namely parametric linear programs, to be imposed on a hypothesis space for learning. A major motivation for ours and for similar works is that, when the inductive prior is suited to the problem, we can learn a much better (and more interpretable) model, from far less data, than by applying general-purpose machine learning methods. In settings spanning economics, commerce, and healthcare, data on decisions is expensive to obtain and to collect, so we hope that our approach will help to build better models and to make better decisions.

\bibliography{File/bibliography.bib}
\clearpage
\section*{Appendix}\label{sec:appendix}
\subsection*{Appendix A: Forward Optimization Problem for Figure 1}\label{sec:appendixA}
{\em Forward optimization problem for Figure~\ref{fig:example1}.}
The FOP formulation used is shown in~\eqref{eq:example1_plp} below. 
\begin{equation} \label{eq:example1_plp}
\begin{alignedat}{1}
\minimize_{x_1, x_2} \quad & \cos(w_1 + w_2 u) x_1 + \sin(w_1 + w_2 u) x_2 \\
\subjto   \quad & (1 + w_2 u) x_1 \ge w_1 \\
                & (1 + w_1) x_2   \ge w_2 u \\
                & x_1 + x_2       \le 1 + w_1 + w_2 u
\end{alignedat}
\end{equation}
For a fixed $u$ and weights $\bw=(w_1, w_2)$ it is an LP. The observation $\xt_1=(-0.625,0.925)$ was generated using $u_1=1.0$ with true parameters $\bw=(-0.5,-0.2)$.

For illustrative clarity, the panels in Figure~\ref{fig:example1} depicting the specific feasible regions for $\{\bw_1, \bw_2, \bw_3\}$ are slightly adjusted and stylized from the actual PLP~\eqref{eq:example1_plp}, but are qualitatively representative.

\subsection*{Appendix B: Redundancy Among Target-Feasibility Constraints}

Redundant constraints in \eqref{eqn:IOP_outer} are not problematic in principle.  Still, removing redundant constraints may help overall performance, either in terms of speed or numerical stability of the `outer' solver. Here we discuss strategies for automatically removing redundant constraints, depending on assumptions. In this section, when we use $\bx$ or $\bx_i$ it should be understood to represent some target $\xt$ or $\xt_i$.

{\bf Constraints that are equivalent. \;} There may exist indices $i$ and $i'$ for which the corresponding constraints $\ba(\bu_i, \bw)^T\bx_i \leq b(\bu_i, \bw)$ and $\ba(\bu_{i'}, \bw)^T\bx_{i'}\leq b(\bu_{i'}, \bw)$ are identical or equivalent. For example, when a constraint is independent of $\bu$ this often results in identical training targets $\bx_i$ and $\bx_{i'}$ that produce identical constraints. The situation for equality constraints is similar.

{\bf Constraints independent of $\bw$. \;} If an individual constraint $\ba(\bu, \bw)^T \bx \leq b(\bu, \bw)$ is independent of $\bw$ then either:
\begin{enumerate}[itemsep=-.05em,topsep=-.18em,leftmargin=0.7cm]
    \item $\ba(\bu_i)^T \bx_i \leq b(\bu_i)$ for all $i$ so the constraint can be omitted; or,
    \item $\ba(\bu_i)^T \bx_i > b(\bu_i)$ for some $i$ so the \eqref{eq:ILOP} formulation is infeasible due to model misspecification, either in structural assumptions, or assumptions about noise.
\end{enumerate}

The same follows for any equality constraint $\bg(\bu, \bw)^T \bx = h(\bu, \bw)$ that is independent of $\bw$. For example, in our minimum-cost multi-commodity flow experiments, the flow conservation constraints (equality) are independent of $\bw$ and so are omitted from \eqref{eqn:IOP_outer} in the corresponding ILOP formulation.

{\bf Constraints affinely-dependent in $\bw$. \;} Constraints may be affinely-dependent on parameters $\bw$. For example, this is a common assumption in robust optimization \citep{zhen2018adjustable}. Let $\bA(\bu, \bw)$ and $\bb(\bu, \bw)$ represent the constraints that are affinely dependent on $\bw \in \bbR^K$. We can write
\begin{equation*}
\begin{aligned}
\bA(\bu, \bw) = \bA^{\!0}(\bu) + \sum_{k=1}^K w_k \bA^{\!k}(\bu) & \text{\qquad and \quad} &
\bb(\bu, \bw) = \bb^0(\bu) + \sum_{k=1}^K w_k \bb^k(\bu)
\end{aligned}
\end{equation*}
for some matrix-valued functions $\bA^{k}(\cdot)$ and vector-valued functions $\bb^k(\cdot)$. It is easy to show that we can then rewrite the constraints $\bA(\bu, \bw) \bx \leq \bb(\bu, \bw)$ as $\btA(\bu, \bx) \bw \leq \btb(\bu, \bx)$ where
\begin{equation*}
\begin{aligned}
\btA(\bu, \bx) &= \begin{bmatrix} \bA^{\!1}(\bu) \bx - \bb^1(\bu) & \cdots & \bA^{\!K}\!(\bu) \bx - \bb^{K}(\bu) \end{bmatrix} \\
\btb(\bu, \bx) &= \bb^0(\bu) - \bA^{\!0}(\bu) \bx.
\end{aligned}
\end{equation*}
Similarly if $\bG(\bu, \bw) \bx = \bh(\bu, \bw)$ are affine in $\bw$ we can rewrite them as $\btG(\bu, \bx) \bw = \bth(\bu, \bx)$. If we apply these functions across all training samples $i=1,\ldots,N$, and stack their coefficients as
\begin{equation*}
\btA = \begin{bmatrix}
\btA(\bu_i, \bx_i)
\end{bmatrix}_{i=1}^N, \quad 
\btb = \begin{bmatrix}
\btb(\bu_i, \bx_i) \\
\end{bmatrix}_{i=1}^N, \quad
\btG = \begin{bmatrix}
\btG(\bu_i, \bx_i)
\end{bmatrix}_{i=1}^N, \quad 
\bth = \begin{bmatrix}
\bth(\bu_i, \bx_i)
\end{bmatrix}_{i=1}^N
\end{equation*}
then the corresponding ILOP constraints \eqref{eqn:IOP_outer} reduce to a set of linear `outer' constraints $\btA \bw \leq \btb$ and $\btG \bw = \bth$ where $\btA \in \bbR^{N M_1 \times K}, \btb \in \bbR^{N M_1}, \btG \in \bbR^{N M_2 \times K}, \bth \in \bbR^{N M_2}$. These reformulated constraint matrices are the system within which we eliminate redundancy in the affinely-dependent case, continued below.

{\bf Equality constraints affinely-dependent in $\bw$. \;} We can eliminate affinely-dependent equality constraint sets by reparametrizing the ILOP search over a lower-dimensional space; this is what we do for the experiments with equality constraints shown in Figure~\ref{fig:1b}, although the conclusions do not change with or without this reparametrization. To reparametrize the ILOP problem, compute a Moore-Penrose pseudoinverse $\btG^+ \in \bbR^{K \times NM_2}$ to get a direct parametrization of constrained vector $\bw$ in terms of an unconstrained vector $\bw' \in \bbR^{K}$:
\begin{equation} \label{eq:pseudoinverse}
\bw(\bw') = \btG^+\bth + (\bI - \btG^+ \btG) \bw'.
\end{equation}
By reparametrizing \eqref{eq:ILOP} in terms of $\bw'$ we guarantee $\btG \bw(\bw') = \bth$ is satisfied and can drop equality constraints from \eqref{eqn:IOP_outer} entirely. There are three practical issues with~\eqref{eq:pseudoinverse}:
\begin{enumerate}[itemsep=-.05em,topsep=-.18em,leftmargin=0.7cm]
    \item Constrained vector $\bw$ only has $K' \equiv K - \mathrm{rank}(\btG)$ degrees of freedom, so we would like to re-parametrize over a lower-dimensional $\bw' \in \bbR^{K'}$.
    \item To search over $\bw' \in \bbR^{K'}$ we need to specify $\btA' \in \bbR^{N M_1 \times K'}$ and $\btb' \in \bbR^{N M_1}$ such that $\btA' \bw' \leq \btb'$ is equivalent to $\btA \bw(\bw') \leq \btb$.
    \item Given initial $\bw_\mathrm{ini} \in \bbR^K$ we need a corresponding $\bw'_\mathrm{ini} \in \bbR^{K'}$ to initialize our search.
\end{enumerate}

To address the first issue, we can let the final $K-K'$ components of $\bw' \in \bbR^{K}$ in \eqref{eq:pseudoinverse} be zero, which corresponds to using a lower-dimensional $\bw' \in \bbR^{K'}$. As shorthand let matrix $\bP \in \bbR^{K \times K'}$ be
\begin{equation*}
\begin{aligned}
\bP &\equiv (\bI_{K \times K} - \btG^+ \btG) \bI_{K \times K'} = \bI_{K \times K'} - (\btG^+ \btG)_{1:K,1:K'}
\end{aligned}
\end{equation*}
where $\bI_{K \times K'}$ denotes $\begin{bmatrix} \,\bI_{K' \times K'} \\ \bzero_{(K-K') \times K'} \end{bmatrix}$ as in \texttt{torch.eye(K, K')} and $(\bG^+ \bG)_{1:K,1:K'}$ denotes the first $K'$ columns of $K \times K$ matrix $\bG^+ \bG$. Then we have $\bw(\bw') = \bG^+\bh + \bP \bw'$ where the full dimension of $\bw' \in \bbR^{K'}$ matches the degrees of freedom in $\bw$ subject to $\btG \bw = \bth$ and we have $\btG \bw(\bw') = \bth$ for any choice of $\bw'$.

To address the second issue, simplifying $\btA \bw(\bw') \leq \btb$ gives inequality constraints $\btA' \bw' \leq \btb'$ with $\btA' = \btA \bP$ and $\btb' = \btb - \btA \btG^+ \bth$.

To address the third issue we must solve for $\bw'_\mathrm{ini} \in \bbR^{K'}$ in the linear system $\bP \bw'_\mathrm{ini} = \bw_\mathrm{ini} - \btG^+\bth$. Since $\mathrm{rank}(\bP) = K'$ the solution exists and is unique.

Consider also the effect of this reparametrization when $\btG \bw = \bth$ is an infeasible system, for example due to noisy observations or misspecified constraints. In that case searching over $\bw'$ automatically restricts the search to $\bw$ that satisfy $\btG \bw = \bth$ in a least squares sense, akin to adding an infinitely-weighted $\|\btG \bw - \bth\|^2$ term to the ILOP objective.

{\bf Inequality constraints affinely-dependent in $\bw$. \;} After transforming affinely-dependent inequality constraints to $\btA' \bw' \leq \btb'$, detecting redundancy among these constraints can be as hard as solving an LP \citep{telgen1983identifying}. Generally, inequality constraint $\ba_j^T \bw \leq b_j$ is redundant with respect to $\bA \bw \leq \bb$ if and only if the optimal value of the following LP is non-negative:
\begin{equation} \label{eq:redundantlp}
\begin{aligned}
\minimize_{\bw} \quad & b_j - \ba_j^T\bw \\
\subjto \quad & \bA_{\{j' \neq j\}} \bw \leq \bb_{\{j' \neq j\}}
\end{aligned}
\end{equation}
Here $\ba_j$ is the $j^\text{th}$ row of $\bA$ and $\bA_{\{j' \neq j\}}$ is all the rows of $\bA$ except the $j^\text{th}$. If the optimal value to \eqref{eq:redundantlp} is non-negative then it says ``we tried to violate the $j^\text{th}$ constraint, but the other constraints prevented it, and so the $j^\text{th}$ constraint must be redundant.'' However, \citet{telgen1983identifying} reviews much more efficient methods of identifying redundant linear inequality constraints, by analysis of basic basic variables in a simplex tableau. \citet{zhen2018adjustable} proposed a `redundant constraint identification' (RCI) procedure proposed by that is directly analogous to \eqref{eq:redundantlp} along with another heuristic RCI procedure.

{\bf Constraints polynomially-dependent in $\bw$. \;} Similar to the affinely-dependent case, when the coefficients of constraints $\bA(\bu, \bw)\bx \leq \bb(\bu, \bw)$ and $\bG(\bu, \bw)\bx \leq \bh(\bu, \bw)$ are polynomially-dependent on $\bw$, we can rewrite the constraints in terms of $\bw$. Redundancy among equality constraints of the resulting system can be simplified by computing a minimal Gr\"obner basis \citep{cox2013ideals}, for example by Buchberger's algorithm which is a generalization of Gaussian elimination; see the paper by \citet{lim2012groebner} for a review of Gr\"obner basis techniques applicable over a real field. Redundancy among inequality constraints for nonlinear programming has been studied \citep{caron2009redundancy,obuchowska1995minimal}. Simplifying polynomial systems of equalities and inequalities is a subject of semialgebraic geometry and involves generalizations of Fourier-Motzkin elimination. Details are beyond the scope of this manuscript.


\subsection*{Appendix C: Proofs of Theorem 1 and Corollary 1}\label{sec:appendixB}

\begin{proof}[Proof of Theorem \ref{thm:oegrad}]
The dual linear program associated with \eqref{eq:LP} is
\begin{equation} \label{eq:DP} \tag{DP}
\begin{aligned}
\maximize_{\blambda,\, \bnu} \quad & \bb^T \blambda + \bh^T \bnu \\
\subjto         \quad & \bA^T \blambda + \bG^T \bnu \:=\: \bc\\
                      & \blambda \:\leq\:    \bzero,
\end{aligned}
\end{equation}
where $\blambda \in \bbR^{M_1}_{\leq 0}, \bnu \in \bbR^{M_2}$ are the associated dual variables for the primal inequality and equality constraints, respectively. 

Since $\bx^*$ is optimal to~\eqref{eq:LP} and $\blambda^*, \bnu^*$ are optimal to~\eqref{eq:DP}, then $(\bx^*,\;\blambda^*,\;\bnu^*)$ satisfy the KKT conditions (written specialized to the particular LP form we use):
\begin{equation} \label{eq:KKT} \tag{KKT}
\begin{aligned}
\bA \bx &\le \bb \\
\bG \bx &= \bh \\
\bA^T \blambda + \bG^T \bnu  &= \bc \\
\blambda &\leq \bzero \\
\bD(\blambda)(\bA \bx - \bb) &= \bzero \\
\end{aligned}
\end{equation}
where $\bD(\blambda)$ is the diagonal matrix having $\blambda$ on the diagonal.
The first two constraints correspond to primal feasibility, the next two to dual feasibility and the last one specifies complementary slackness. From here forward it should be understood that $\bx, \blambda, \bnu$ satisfy KKT even when not emphasized by $*$.

As in the paper by \citet{Amos2017}, implicitly differentiating the equality constraints in \eqref{eq:KKT} gives

\begin{equation} \label{eq:DKKT} \tag{DKKT}
\begin{aligned}
\bG \d\bx &= \d \bh - \d \bG\bx\\
\bA^T \d\blambda + \bG^T \d \bnu &= \d \bc - \d \bA^T \blambda - \d \bG^T \bnu \\
\bD(\blambda)\bA \d \bx  + \bD(\bA \bx - \bb) \d\blambda &=  \bD(\blambda)(\d \bb - \d \bA \bx)
\end{aligned}
\end{equation}

where $\d \bc, \d \bA, \d \bb, \d \bG, \d \bh$ are parameter differentials and $\d \bx, \d \blambda, \d \bnu$ are solution differentials, all having the same dimensions as the variables they correspond to. Because \eqref{eq:KKT} is a second-order system, \eqref{eq:DKKT} is a system of linear equations. Because the system is linear, a partial derivative such as $\pd[x_j^*]{b_i}$ can be determined (if it exists) by setting $\d b_i = 1$ and all other parameter differentials to~$0$, then solving the system for solution differential $\d x_j$, as shown by \citet{Amos2017}.

We can assume \eqref{eq:KKT} is feasible in $\bx, \blambda, \bnu$. In each case of the main proof it will be important to characterize conditions under which~\eqref{eq:DKKT} is then feasible in $\d \bx$.
This is because, if \eqref{eq:DKKT} is feasible in at least $\d \bx$, then by substitution we have

\begin{equation} \label{eq:oegrad1}
\begin{aligned}
\bc^T \d \bx &= (\bA^T \blambda + \bG^T \bnu)^T \d \bx \\
&= \blambda^T \bA \d \bx + \bnu^T \bG \d \bx \\
&= \blambda^T(\d \bb - \d \bA \bx) + \bnu^T (\d \bh - \d \bG \bx)
\end{aligned}
\end{equation}

and this substitution is what gives the total derivatives their form. In \eqref{eq:oegrad1} the substitution $\blambda^T \bA \d \bx = \blambda^T (\d \bb - \d \bA \bx)$ holds because $\bx, \blambda$ feasible in \eqref{eq:KKT} implies $\lambda_i < 0 \Rightarrow \bA_i \bx - b_i = 0$ in \eqref{eq:DKKT}, where $\bA_i$ is the $i$\textsuperscript{th} row of $\bA$. Whenever $\d \bx$ is feasible in \eqref{eq:DKKT} we have $\lambda_i \bA_i \d \bx = \lambda_i(\d b_i - \d \bA_i \bx)$ for any $\lambda_i \leq 0$, where $\d \bA_i$ is the $i$\textsuperscript{th} row of differential $\d \bA$.

Note that \eqref{eq:oegrad1} holds even if $\text{(DKKT)}$ is not feasible in $\d \blambda$ and/or $\d \bnu$. In other words, it does not require the KKT point $(\bx^*, \blambda^*, \bnu^*)$ to be differentiable with respect to $\blambda^*$ and/or $\bnu^*$.

Given a KKT point $(\bx^*, \blambda^*, \bnu^*)$ let $\cI, \cJ, \cK$ be a partition of inequality indices $\{1, \ldots, M_1\}$ where
\begin{equation*}
\begin{aligned}
\cI &= \left\{\, i : \lambda^*_i < 0,\, \bA_i \bx^* = b_i \,\right\} \\
\cJ &= \left\{\, i : \lambda^*_i = 0,\, \bA_i \bx^* < b_i \,\right\} \\
\cK &= \left\{\, i : \lambda^*_i = 0,\, \bA_i \bx^* = b_i \,\right\} \\
\end{aligned}
\end{equation*}
and the corresponding submatrices of $\bA$ are $\bA_\cI, \bA_\cJ, \bA_\cK$. Then \eqref{eq:DKKT} in matrix form is
\begin{equation} \label{eq:oegrad2}
\begin{bmatrix}
\bG & \bzero & \bzero & \bzero & \bzero \\
\bD(\blambda_\cI)\bA_\cI & \bzero & \bzero & \bzero & \bzero\\
\bzero & \bzero & \bD(\bA_\cJ \bx - \bb_\cJ) & \bzero& \bzero  \\
\bzero & \bzero & \bzero & \bzero & \bzero  \\
\bzero & \bA^T_\cI & \bA^T_\cJ & \bA^T_\cK & \bG^T \\
\end{bmatrix}
\begin{bmatrix}
\,\d \bx_{\phantom{\cI}} \\
\,\d \blambda_\cI \\
\,\d \blambda_\cJ \\
\,\d \blambda_\cK \\
\,\d \bnu_{\phantom{\cI}} \\
\end{bmatrix} = 
\begin{bmatrix}
\d \bh - \d \bG \bx \\
\d \bb_\cI - \d \bA_\cI \bx \\
\bzero \\
\bzero \\
\: \d \bc - \d \bA^T \blambda - \d \bG^T \bnu \: \\
\end{bmatrix}
\end{equation}

The pattern of the proof in each case will be to characterize feasibility of \eqref{eq:oegrad2} in $\d \bx$ and then apply \eqref{eq:oegrad1} for the result.

{\bf Evaluating $\pd[z]{\bc}$. \;} Consider $\pd[z]{c_j} = x^\mathrm{obs}_j - x^*_j - \bc^T \pd[\bx^*]{c_j}$. To evaluate the $\bc^T \pd[\bx^*]{c_j}$ term, set $\d c_j = 1$ and all other parameter differentials to $0$. Then the right-hand side of~\eqref{eq:oegrad2} becomes

\begin{equation} \label{eq:oegrad2c}
\begin{bmatrix}
\bG & \bzero & \bzero & \bzero & \bzero \\
\bD(\blambda_\cI)\bA_\cI & \bzero & \bzero & \bzero & \bzero\\
\bzero & \bzero & \bD(\bA_\cJ \bx - \bb_\cJ) & \bzero& \bzero  \\
\bzero & \bzero & \bzero & \bzero & \bzero  \\
\bzero & \bA^T_\cI & \bA^T_\cJ & \bA^T_\cK & \bG^T \\
\end{bmatrix}
\begin{bmatrix}
\,\d \bx_{\phantom{\cI}} \\
\,\d \blambda_\cI \\
\,\d \blambda_\cJ \\
\,\d \blambda_\cK \\
\,\d \bnu_{\phantom{\cI}} \\
\end{bmatrix} = 
\begin{bmatrix}
\bzero \\
\bzero \\
\bzero \\
\bzero \\
\bone^j \\
\end{bmatrix}
\end{equation}

where $\bone^j$ denotes the vector with $1$ for component $j$ and $0$ elsewhere.
System~\eqref{eq:oegrad2c} is feasible in $\d \bx$ (not necessarily unique) so we can apply~\eqref{eq:oegrad1} to get $\bc^T \pd[\bx^*]{c_j} = \bc^T \d \bx = \blambda^T(\bzero - \bzero \bx) + \bnu^T(\bzero - \bzero \bx) = 0$. The result for $\pd[z]{\bc}$ then follows from $\bc^T \pd[\bx^*]{\bc} = \bzero$.

{\bf Evaluating $\pd[z]{\bh}$. \;} Consider $\pd[z]{h_i} = -\bc^T \pd[\bx^*]{h_i}$. Set $\d h_i = 1$ and all other parameter differentials to $0$. Then the right-hand side of~\eqref{eq:oegrad2} becomes

\begin{equation} \label{eq:oegrad2h}
\begin{bmatrix}
\bG & \bzero & \bzero & \bzero & \bzero \\
\bD(\blambda_\cI)\bA_\cI & \bzero & \bzero & \bzero & \bzero\\
\bzero & \bzero & \bD(\bA_\cJ \bx - \bb_\cJ) & \bzero& \bzero  \\
\bzero & \bzero & \bzero & \bzero & \bzero  \\
\bzero & \bA^T_\cI & \bA^T_\cJ & \bA^T_\cK & \bG^T \\
\end{bmatrix}
\begin{bmatrix}
\,\d \bx_{\phantom{\cI}} \\
\,\d \blambda_\cI \\
\,\d \blambda_\cJ \\
\,\d \blambda_\cK \\
\,\d \bnu_{\phantom{\cI}} \\
\end{bmatrix} = 
\begin{bmatrix}
\bone^i \\
\bzero \\
\bzero \\
\bzero \\
\bzero \\
\end{bmatrix}
\end{equation}

Since $\bx^*$ is non-degenerate in the sense of \citet{tijssen1998balinski}, then there are at most $D$ active constraints (including equality constraints) and the rows of $\begin{bmatrix} \bG \\ \bA_\cI \end{bmatrix}$ are also linearly independent. Since active constraints are linearly independent, system~\eqref{eq:oegrad2h} is feasible in $\d \bx$ across all $i \in \{1, \ldots, M_2\}$. We can therefore apply~\eqref{eq:oegrad1} to get $\bc^T \pd[\bx^*]{h_i} = \bc^T \d \bx = \blambda^T(\bzero - \bzero \bx) + \bnu^T(\bone^i - \bzero \bx) = \nu_i$. The result for $\pd[z]{\bh}$ then follows from $\bc^T \pd[\bx^*]{\bh} = \bnu^{*T}$.

{\bf Evaluating $\pd[z]{\bb}$. \;} Consider $\pd[z]{b_i} = -\bc^T \pd[\bx^*]{b_i}$. Set $\d b_i = 1$ and all other parameter differentials to $0$. For $i \in \cI$ the right-hand side of~\eqref{eq:oegrad2} becomes

\begin{equation} \label{eq:oegrad2b}
\begin{bmatrix}
\bG & \bzero & \bzero & \bzero & \bzero \\
\bD(\blambda_\cI)\bA_\cI & \bzero & \bzero & \bzero & \bzero\\
\bzero & \bzero & \bD(\bA_\cJ \bx - \bb_\cJ) & \bzero& \bzero  \\
\bzero & \bzero & \bzero & \bzero & \bzero  \\
\bzero & \bA^T_\cI & \bA^T_\cJ & \bA^T_\cK & \bG^T \\
\end{bmatrix}
\begin{bmatrix}
\,\d \bx_{\phantom{\cI}} \\
\,\d \blambda_\cI \\
\,\d \blambda_\cJ \\
\,\d \blambda_\cK \\
\,\d \bnu_{\phantom{\cI}} \\
\end{bmatrix} = 
\begin{bmatrix}
\bzero \\
\lambda_i \bone^i \\
\bzero \\
\bzero \\
\bzero \\
\end{bmatrix}
\end{equation}

Since $\bx^*$ is non-degenerate, then system~\eqref{eq:oegrad2b} is feasible in $\d \bx$ for all $i \in \cI$ by identical reasoning as for $\pd[z]{h_i}$. For $i \in \cJ \cup \cK$ the right-hand side of~\eqref{eq:oegrad2} is zero and so the system is feasible in $\d \bx$. System~\eqref{eq:oegrad2b} is therefore feasible in $\d \bx$ across all $i \in \{1, \ldots, M_1\}$. We can therefore apply~\eqref{eq:oegrad1} to get $\bc^T \pd[\bx^*]{b_i} = \bc^T \d \bx = \blambda^T(\bone^i - \bzero \bx) + \bnu^T(\bzero - \bzero \bx) = \lambda_i$. The result for $\pd[z]{\bb}$ then follows from $\bc^T \pd[\bx^*]{\bb} = \blambda^{*T}$.

{\bf Evaluating $\pd[z]{\bG}$. \;} Consider $\pd[z]{G_{ij}} = -\bc^T \pd[\bx^*]{G_{ij}}$. Set $\d G_{ij} = 1$ and all other parameter differentials to $0$. Then the right-hand side of~\eqref{eq:oegrad2} becomes

\begin{equation} \label{eq:oegrad2g}
\begin{bmatrix}
\bG & \bzero & \bzero & \bzero & \bzero \\
\bD(\blambda_\cI)\bA_\cI & \bzero & \bzero & \bzero & \bzero\\
\bzero & \bzero & \bD(\bA_\cJ \bx - \bb_\cJ) & \bzero& \bzero  \\
\bzero & \bzero & \bzero & \bzero & \bzero  \\
\bzero & \bA^T_\cI & \bA^T_\cJ & \bA^T_\cK & \bG^T \\
\end{bmatrix}
\begin{bmatrix}
\,\d \bx_{\phantom{\cI}} \\
\,\d \blambda_\cI \\
\,\d \blambda_\cJ \\
\,\d \blambda_\cK \\
\,\d \bnu_{\phantom{\cI}} \\
\end{bmatrix} = 
\begin{bmatrix}
-x_j \bone^{i} \\
\bzero \\
\bzero \\
\bzero \\
-\nu_i \bone^{j} \\
\end{bmatrix}
\end{equation}

Since $\bx^*$ is non-degenerate, then~\eqref{eq:oegrad2g} is feasible in $\d \bx$ for all $i \in \{1, \ldots, M_2\}$ and $j \in \{1,\ldots,D\}$ by same reasoning as $\pd[z]{\bh}$. Applying~\eqref{eq:oegrad1} gives $\bc^T \pd[\bx^*]{G_{ij}} = \bc^T \d \bx = \blambda^T(\bzero - \bzero \bx) + \bnu^T(\bzero - \bone^{ij} \bx) = -\nu_i x_j$ where $\bone^{ij}$ is the $M_2 \times D$ matrix with $1$ for component $(i,j)$ and zeros elsewhere. The result for $\pd[z]{\bG}$ then follows from $\bc^T \pd[\bx^*]{\bG} = -\bnu^{*}\bx^{*T}$ where we have slightly abused notation by dropping the leading singleton dimension of the $1 \times M_2 \times D$ Jacobian.

{\bf Evaluating $\pd[z]{\bA}$. \;} Consider $\pd[z]{A_{ij}} = -\bc^T \pd[\bx^*]{A_{ij}}$. Set $\d A_{ij} = 1$ and all other parameter differentials to $0$. Then the right-hand side of~\eqref{eq:oegrad2} becomes

\begin{equation} \label{eq:oegrad2a}
\begin{bmatrix}
\bG & \bzero & \bzero & \bzero & \bzero \\
\bD(\blambda_\cI)\bA_\cI & \bzero & \bzero & \bzero & \bzero\\
\bzero & \bzero & \bD(\bA_\cJ \bx - \bb_\cJ) & \bzero& \bzero  \\
\bzero & \bzero & \bzero & \bzero & \bzero  \\
\bzero & \bA^T_\cI & \bA^T_\cJ & \bA^T_\cK & \bG^T \\
\end{bmatrix}
\begin{bmatrix}
\,\d \bx_{\phantom{\cI}} \\
\,\d \blambda_\cI \\
\,\d \blambda_\cJ \\
\,\d \blambda_\cK \\
\,\d \bnu_{\phantom{\cI}} \\
\end{bmatrix} = 
\begin{bmatrix}
\bzero \\
-x_j \bone^{i} \\
\bzero \\
\bzero \\
-\lambda_i \bone^{j} \\
\end{bmatrix}
\end{equation}

Since $\bx^*$ is non-degenerate, then by similar arguments as $\pd[z]{\bb}$ and $\pd[z]{\bG}$ \eqref{eq:oegrad2a} is feasible in $\d \bx$ for all $i \in \{1, \ldots, M_1\}$ and $j \in \{1, \ldots, D\}$ and the result for $\pd[z]{\bA}$ follows from $\bc^T \pd[\bx^*]{\bG} = -\blambda^{*}\bx^{*T}$.
\end{proof}

\begin{proof}[Proof of Corollary \ref{thm:oeunique}]
The result for $\pd[z]{\bc}$ is direct. In linear programming, \citet{tijssen1998balinski} showed that the existence of a non-degenerate primal solution $\bx^*$ implies uniqueness of the dual solution $\blambda^*, \bnu^*$ so the result for $\pd[z]{\bb}$ and $\pd[z]{\bh}$ follows directly.
If a non-degenerate solution $\bx^*$ is unique then matrices $\blambda^* \bx^{*T}$ and $\bnu^* \bx^{*T}$ are both unique, regardless of whether $\bc=\bzero$. In the other direction, if $\blambda^* \bx^{*T}$ and $\bnu^* \bx^{*T}$ are both unique, consider two mutually exclusive and exhaustive cases: (1) when either $\blambda^* \neq \bzero$ or $\bnu^* \neq \bzero$ this would imply $\bx^*$ unique, and (2) when both $\blambda^* = \bzero$ and $\bnu^* = \bzero$ in \eqref{eq:DP} this would imply $\bc = \bzero$, {\em i.e.} the primal linear program \eqref{eq:LP} is merely a feasibility problem. The result for $\pd[z]{\bA}$ and $\pd[z]{\bG}$ then follows.
\end{proof}

\subsection*{Appendix D: Additional Results}\label{sec:appendixC}

Figure~\ref{fig:1a} shows the task of learning ($\bc$, $\bA$, $\bb$) with a $K\!=\!6$ dimensional parametrization $\bw$ and 20 training observations for a $D$ dimensional decision space $\bx$ with $M_1$ inequality constraints. The five different considered combinations of $D$ and $M_1$ are shown in the figure. The results over all problem sizes are similar to the case of $D\!=\!10, M_1\!=\!80$ shown in the main paper. RS fails; COBYLA `succeeds' on ~25\% of instances; SQP succeeds on 60-75\%, which is substantially better. 
As expected, instances with higher $D$, are more challenging as we observe that the success rate decreases slightly.
The success curve of \SQPbprop slightly lags those of \SQPimpl and \SQPdir due to the overhead of backpropagating through the steps of the interior point solver. 
However, this computational advantage of \SQPimpl and \SQPdir over \SQPbprop is less obvious on LP instances with $D=10$. For larger LP instances, the overall framework spends significantly more computation time on other components (e.g., solving the forward problem, solving (SQP)). Thus, the advantage of \SQPimpl and \SQPdir in computing gradients is less significant in the overall performance.

We observe similar performance on instances with equality constraints, where $\bG$ and $\bh$ also need to be learned; see Figure~\ref{fig:1b}. Note that RS failed to find a feasible $\bw$ in all instances, caused mainly by the failure to satisfy the equality target feasibility constraints in \eqref{eqn:IOP_outer}. Recall that a feasible $\bw$ means both \eqref{eqn:IOP_outer} and \eqref{eqn:IOP_inner} are satisfied. 

Figure~\ref{fig:1c} shows the performance on the LPs, where the dimensionality of $\bw$ is higher. We observe that COBYLA performs poorly, while SQP methods succeed on all instances. This is caused by the finite-difference approximation technique used in COBYLA which is inefficient in high dimension $\bw$ space. This result demonstrates the importance of using gradient-based methods in high dimensional (in $\bw$) NLP. 

\renewcommand{\thesubfigure}{\roman{subfigure}}

\begin{figure}
\centering
    \begin{subfigure}{\textwidth}
      \centering
          \includegraphics[width=\linewidth]{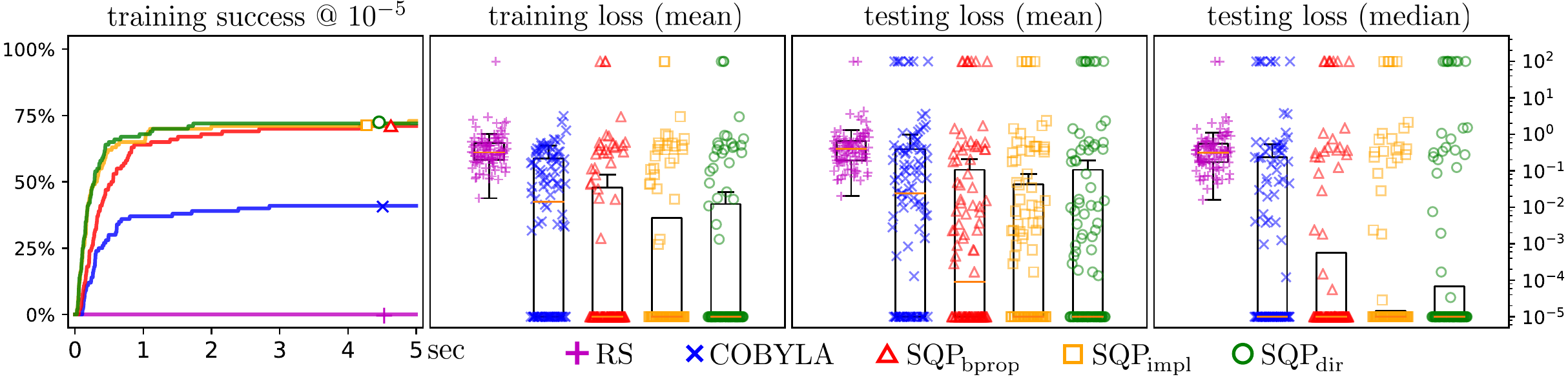}\vspace{-.4em}
          \caption{$D\!=\!2, M_1\!=\!4$\vspace{.4em}}
          \label{D2M14}
    \end{subfigure} 
    \begin{subfigure}{\textwidth}
      \centering
          \includegraphics[width=\linewidth]{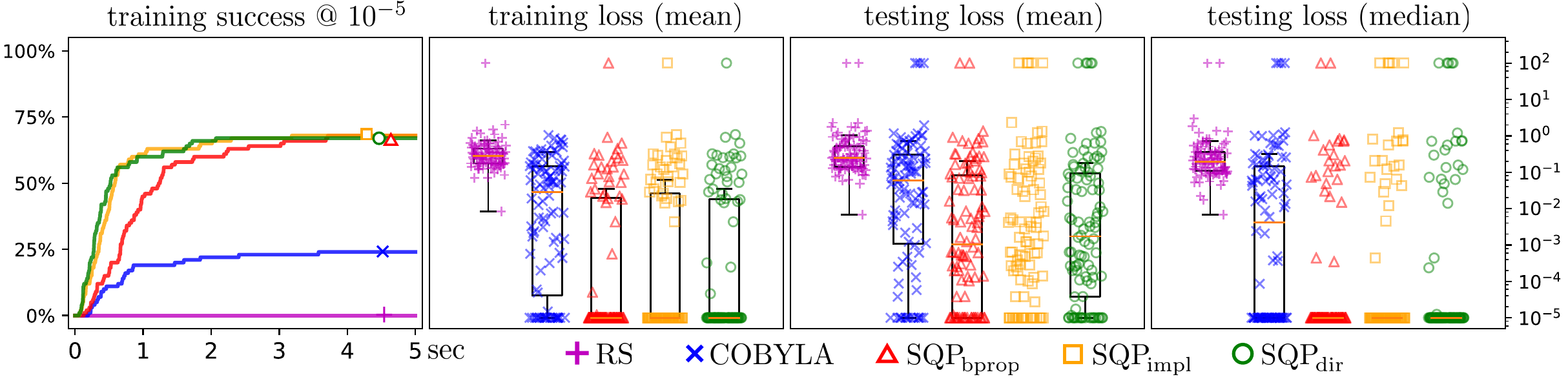}\vspace{-.4em}
          \caption{$D\!=\!2, M_1\!=\!8$\vspace{.4em}}
          \label{D2M18}
    \end{subfigure} \\
    \begin{subfigure}{\textwidth}
      \centering
          \includegraphics[width=\linewidth]{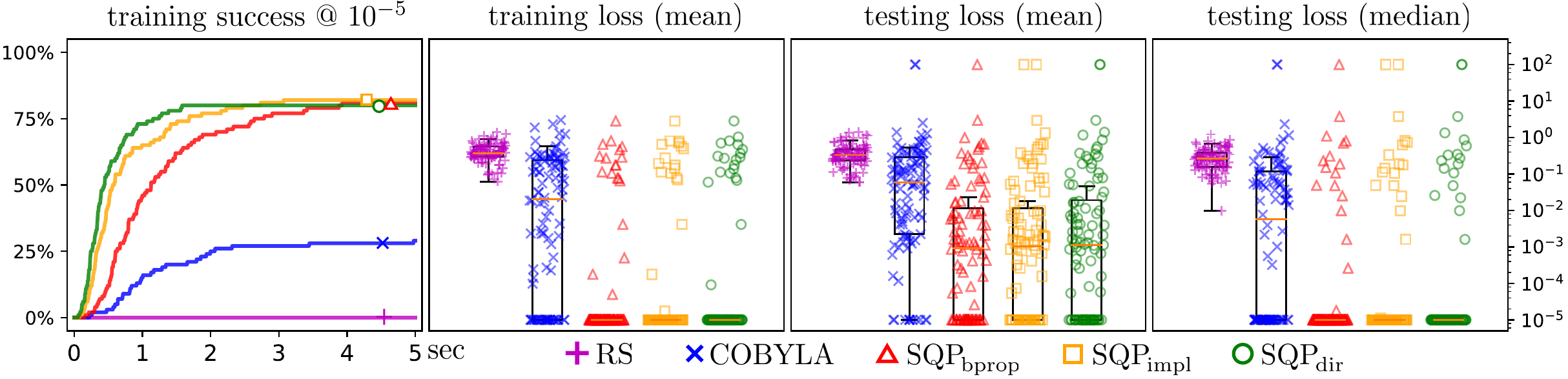}\vspace{-.4em}
          \caption{$D\!=\!2, M_1\!=\!16$\vspace{.4em}}
          \label{D2M116}
    \end{subfigure}
    \begin{subfigure}{\textwidth}
      \centering
          \includegraphics[width=\linewidth]{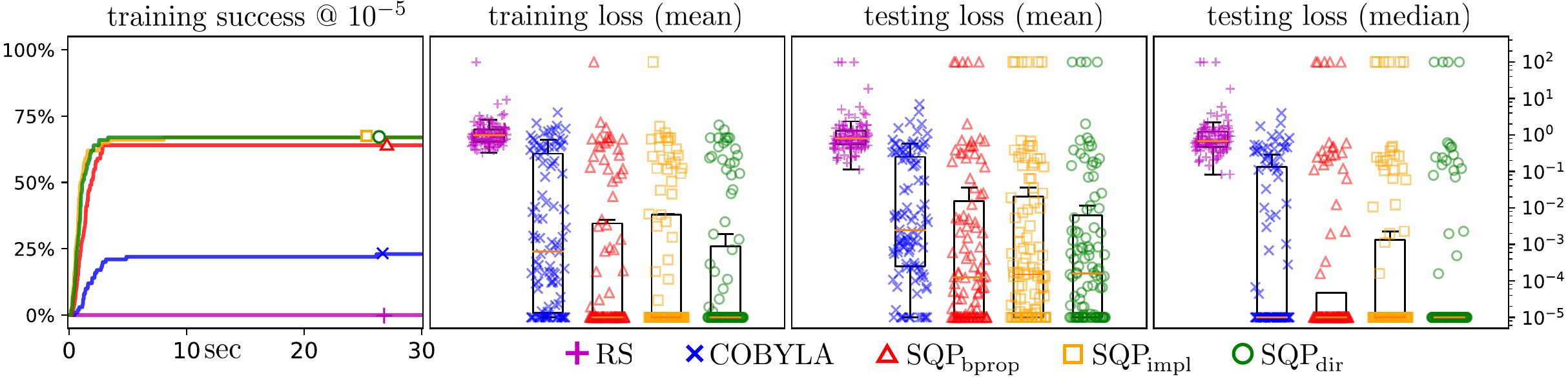}\vspace{-.4em}
          \caption{$D\!=\!10, M_1\!=\!20$\vspace{.4em}}
          \label{D10M120}
    \end{subfigure}
    \begin{subfigure}{\textwidth}
      \centering
          \includegraphics[width=\linewidth]{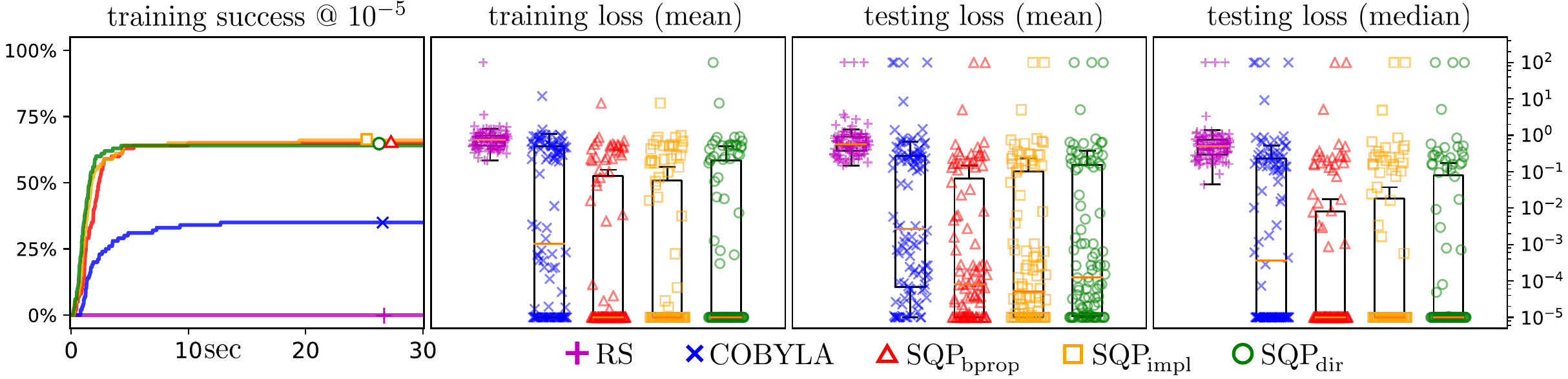}\vspace{-.4em}
          \caption{$D\!=\!10, M_1\!=\!36$\vspace{.4em}}
          \label{D10M136}
    \end{subfigure}    

\caption{A comparison on synthetic PLP instances with varying $D$ and $M_1$. Shown is the probability of achieving zero AOE training loss over time (curves), along with final training and testing loss (box plots). Each mark denotes one of 100 trials (different instances) with 20 training and 20 testing points (problem sizes are indicated for each sub-figure). The AOE testing loss is always evaluated with the `true' cost $\bc$, never the imputed cost. For insight into why the mean testing error is larger than median testing error, see discussion (end of Section~\ref{sec:experiments}).}
\label{fig:1a}
\end{figure}

\begin{figure}[h]
\centering
      \centering
          \includegraphics[width=\textwidth]{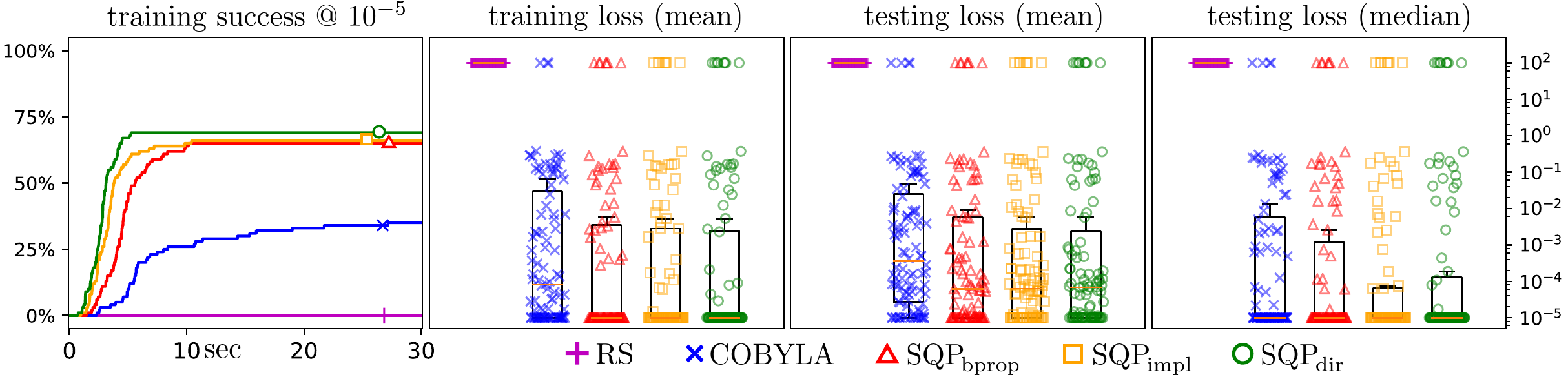}
          \caption{A comparison on synthetic PLP instances with equality constraints ($D\;=\;10$, $M_1\;=\;80$, $M_2\;=\;2$.). Shown is the probability of achieving zero AOE training loss over time (curves), along with final training and testing loss (box plots). Each mark denotes one of 100 trials (different instances) with 20 training and 20 testing points. The AOE testing loss is always evaluated with the `true' cost $\bc$, never the imputed cost.\label{fig:1b}}
\end{figure}

\begin{figure}
\centering
        \begin{subfigure}{\textwidth}
          \centering
              \includegraphics[width=0.67\textwidth]{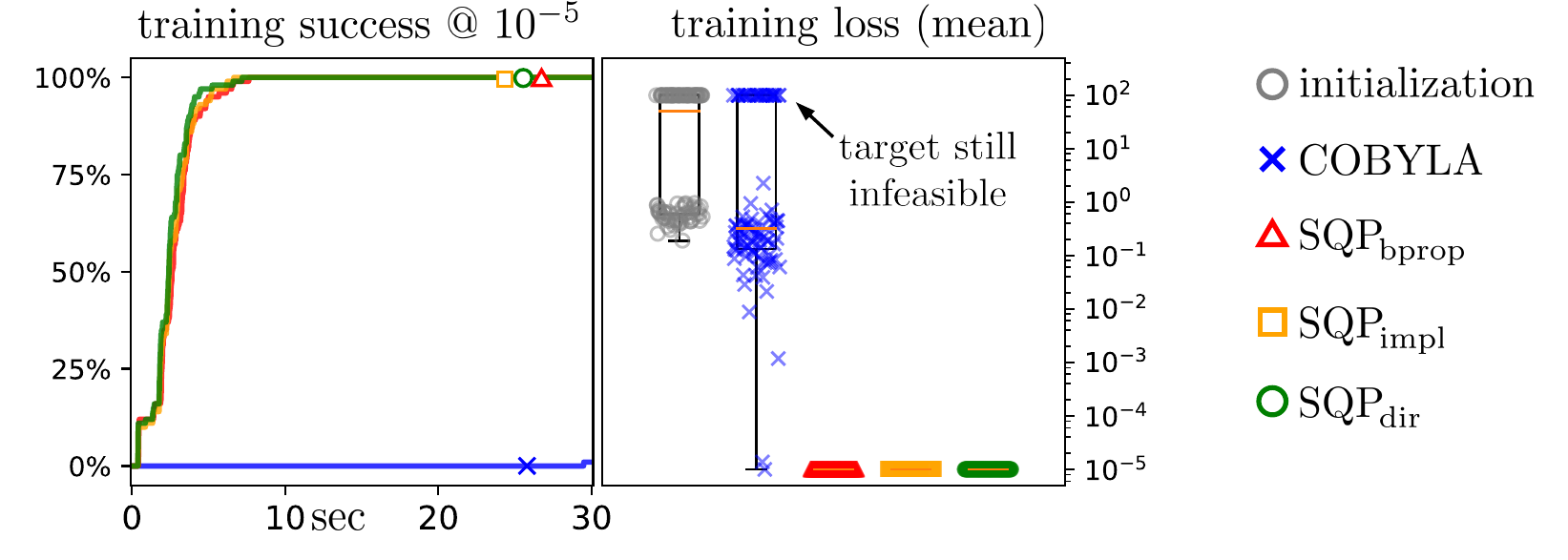}
        \end{subfigure} 
\caption{A comparison on synthetic LP instances  ($D\;=\;10$, $M_1\;=\;80$). Shown is the probability of achieving zero AOE training loss over time (curves), along with final loss (box plots). Each mark denotes one of 100 trials (different instances), each with one training point. Note, in this experiment we aim to learn LP coefficients directly, i.e., $\bw$ comprises all LP coefficients, and the LP coefficients do not depend on $\bu$. Therefore, there is only a single target solution for learning $\bw$, and no testing data.
\label{fig:1c}}
\end{figure}

\textbf{Sensitivity of results to parameter settings\;} The specific results of our experiments can vary slightly with certain choices, but the larger conclusions do not change: the gradient-based SQP methods all perform similarly, and they consistently out-perform non-gradient-based methods, especially for higher-dimensional search. 

Specific choices of parameter settings include numerical tolerance used in the forward solve (e.g. $10^{-5}$ vs $10^{-8}$), algorithm terminate tolerance of the COBYLA and SLSQP, and even PyTorch version (v1.5 vs. nightly builds). 
For example, we tried using strict tolerances and different trust region sizes for COBYLA to encourage the algorithm to search more aggressively, but these made only a small improvement to performance; these small improvements are represented in our results. 
We also observed that, although the homogeneous solver works slightly better when we use a strict numerical tolerance, there is no major difference in the learning results.

In conclusion, our main experiment results are largely insensitive to specific parameter settings.

\subsection*{Appendix E: Parametric Linear Program for Figure~\ref{fig:decisionsurface}}\label{sec:appendixD}

{\em Forward optimization problem for Figure~\ref{fig:decisionsurface}.}
The FOP formulation used is shown in~\eqref{eq:decisionsurface_plp} below. 

\begin{equation} \label{eq:decisionsurface_plp}
\begin{alignedat}{1}
\minimize_{x_1, x_2} \quad & -w_1 u_1 x_1 - w_2 u_2 x_2 \\
\subjto   \quad & x_1 + x_2  \leq \max(1, u_1 + u_2) \\
                & 0 \leq x_1 \leq 1 \\
                & 0 \leq x_2 \leq 1 \\
\end{alignedat}
\end{equation}

The two training points are generated with $\bw=(1, 1)$ at $\bu_1=(1, \frac{1}{3})$ and $\bu_2=(1, \frac{1}{3})$ with testing point $\bu_\mathrm{test}=(\frac{1}{2}, \frac{5}{6})$. PLP learning was initialized at $\bw_\mathrm{ini}=(4, 1)$ and the \SQPimpl algorithm returned $\bw_\mathrm{learned}\approx(\frac{35}{9},\frac{4}{3})$, used to generate the learned decision map depicted in the figure.


\end{document}